\newcommand{\Regret}{\textrm{Regret}}
\DeclareFontFamily{U}{mathx}{}
\DeclareFontShape{U}{mathx}{m}{n}{<-> mathx10}{}
\DeclareSymbolFont{mathx}{U}{mathx}{m}{n}
\DeclareMathAccent{\widecheck}{0}{mathx}{"71}
\DeclareMathOperator*{\Expt}{\mathbb{E}}
\let\emptyset\varnothing
\newcommand{\co}{\mathrm{co}}
\newcommand{\spanned}{\mathrm{span}}
\newcommand{\source}{\mathsf s}
\newcommand{\sink}{\mathsf t}
\newcommand{\mmid}{\,\|\,}
  \protected@write\@auxout{}{%
    \string\@restatelemma{#1}{\detokenize\expandafter{\BODY}}%
  }%
\BODY\end{lemma}%
\newcommand{\@restatelemma}[2]{%
  \expandafter\gdef\csname Lemma@#1\endcsname{#2}%
}
\newcommand{\restateLemma}[1]{%
  \begingroup
  \renewcommand{\thetheorem}{\ref{#1}}%
  \begin{lemma}\csname Lemma@#1\endcsname\end{lemma}%
  \addtocounter{theorem}{-1}
  \endgroup
}
  \protected@write\@auxout{}{%
    \string\@restatetheorem{#1}{\detokenize\expandafter{\BODY}}%
  }%
\BODY\end{theorem}%
\newcommand{\@restatetheorem}[2]{%
  \expandafter\gdef\csname Theorem@#1\endcsname{#2}%
}
\newcommand{\restateTheorem}[1]{%
  \begingroup
  \renewcommand{\thetheorem}{\ref{#1}}%
  \begin{theorem}\csname Theorem@#1\endcsname\end{theorem}%
  \addtocounter{theorem}{-1}
  \endgroup
}
  \protected@write\@auxout{}{%
    \string\@restateproposition{#1}{\detokenize\expandafter{\BODY}}%
  }%
\BODY\end{proposition}%
\newcommand{\@restateproposition}[2]{%
  \expandafter\gdef\csname Proposition@#1\endcsname{#2}%
}
\newcommand{\restateProposition}[1]{%
  \begingroup
  \renewcommand{\thetheorem}{\ref{#1}}%
  \begin{proposition}\csname Proposition@#1\endcsname\end{proposition}%
  \addtocounter{theorem}{-1}
  \endgroup
}
\definecolor{atomBackground}{rgb}{0.98, 0.98, 0.98}
\definecolor{atomForeground}{rgb}{0.4, 0.4, 0.4}
\definecolor{atomBlue}{rgb}{0.13, 0.67, 0.8}
\definecolor{atomGreen}{rgb}{0.33, 0.63, 0.33}
\definecolor{atomRed}{rgb}{1.0, 0.25, 0.25}
\ttfamily\color{atomForeground},
\title{On the Universal Near Optimality of Hedge\\in Combinatorial Settings}
\author{
  Zhiyuan Fan\thanks{Equal contribution} \\
  MIT \\
  \texttt{fanzy@mit.edu} \\
  \And
  Arnab Maiti\footnotemark[1] \\
  University of Washington \\
  \texttt{arnabm2@uw.edu} \\
  \AND
  Kevin Jamieson \\
  University of Washington\\
  \texttt{jamieson@cs.washington.edu} \\
  \And
  Lillian J. Ratliff \\
  University of Washington \\
  \texttt{ratliffl@uw.edu} \\
  \And
  Gabriele Farina \\
  MIT\\
  \texttt{gfarina@mit.edu} \\
}
\begin{document}

\maketitle
\begin{abstract}
In this paper, we study the classical \textsc{Hedge} algorithm in combinatorial settings. In each round, the learner selects a vector $\boldsymbol{x}_t$ from a set $\mathcal{X} \subseteq \{0,1\}^d$, observes a full loss vector $\boldsymbol{y}_t \in \mathbb{R}^d$, and incurs a loss $\langle \boldsymbol{x}_t, \boldsymbol{y}_t \rangle \in [-1,1]$. This setting captures several important problems, including extensive-form games, resource allocation, $m$-sets, online multitask learning, and shortest-path problems on directed acyclic graphs (DAGs). It is well known that \textsc{Hedge} achieves a regret of $\mathcal{O}\big(\sqrt{T \log |\mathcal{X}|}\big)$ after $T$ rounds of interaction. In this paper, we ask whether \textsc{Hedge} is optimal across \emph{all} combinatorial settings. To that end, we show that for any $\mathcal{X} \subseteq \{0,1\}^d$, \textsc{Hedge} is near-optimal—specifically, up to a $\sqrt{\log d}$ factor—by establishing a lower bound of $\Omega\big(\sqrt{T \log(|\mathcal{X}|)/\log d}\big)$ that holds for any algorithm. We then identify a natural class of combinatorial sets—namely, $m$-sets with $\log d \leq m \leq \sqrt{d}$—for which this lower bound is tight, and for which \textsc{Hedge} is provably suboptimal by a factor of exactly $\sqrt{\log d}$. At the same time, we show that \textsc{Hedge} is optimal for online multitask learning, a generalization of the classical $K$-experts problem. Finally, we leverage the near-optimality of \textsc{Hedge} to establish the existence of a near-optimal regularizer for online shortest-path problems in DAGs—a setting that subsumes a broad range of combinatorial domains. Specifically, we show that the classical Online Mirror Descent (\textsc{OMD}) algorithm, when instantiated with the dilated entropy regularizer, is iterate-equivalent to \textsc{Hedge}, and therefore inherits its near-optimal regret guarantees for DAGs.

\end{abstract}
\section{Introduction}
Prediction with expert advice is a central problem in online learning \citep{littlestone1994weighted,cesa1997use,cesa2006prediction,cesa2007improved,arora2012multiplicative}. In this problem, a learner selects a probability distribution over a set of experts $\{1,2,\ldots,K\}$ in each round. After making the choice, the learner observes the losses of all experts, which may be assigned adversarially within $[-1,1]$. The goal is to minimize the \emph{cumulative regret}, defined as the difference between the learner's expected total loss and the loss of the best expert in hindsight after $T$ rounds. A simple and widely used algorithm for this setting is \textsc{Hedge}, introduced by \citet{freund1997decision}, which guarantees a regret bound of $\mathcal{O}(\sqrt{T\log K})$.

Since its introduction, the \textsc{Hedge} algorithm has been extended to a variety of settings, including adversarial bandits \citep{auer2002nonstochastic}, continuous action spaces \citep{krichene2015hedge}, stochastic regimes \citep{mourtada2019optimality}, discounted losses \citep{freund2008new}, and adaptive learning rates \citep{erven2011adaptive}. An important special case is \emph{combinatorial settings}, where the learner selects a vector $x_t$ from a combinatorial set $\mathcal{X} \subseteq \{0,1\}^d$ in each round, observes a loss vector $\by_t$, and incurs a loss of $\langle \bx_t, \by_t \rangle \in [-1,1]$. The objective remains to minimize regret against the best fixed vector $\bx^* \in \mathcal{X}$ in hindsight.

The combinatorial setting captures a wide variety of problems, including extensive-form games, resource allocation games (\emph{e.g.}, Colonel Blotto problems), online multitask learning problem, $\{0,1\}^d$ hypercube, perfect matchings, spanning trees, cut sets, $m$-sets, and online shortest paths in directed acyclic graphs (DAGs). These problems have wide-ranging applications. For instance, extensive-form games provide a foundational framework for modeling sequential games with imperfect information and have been used to build human-level and even superhuman-level AI agents for real-world games \citep{moravvcik2017deepstack,bowling2015heads,brown2018superhuman,brown2019superhuman}. Online shortest path problems in DAGs arise naturally in applications like network routing \citep{awerbuch2004adaptive,cohen2017tight}. Resource allocation games have been widely studied in the context of military strategy, political campaigns, sports, and advertising \citep{behnezhad2023fast,ahmadinejad2019duels}.

Given their broad relevance, these combinatorial problems have been extensively studied through the lens of online learning \citep{takimoto2003path,kalai2005efficient,koolen2010hedging,cesa2012combinatorial,cohen2015following,rahmanian2017online,vu2019combinatorial}. In the full-information setting, where the learner observes the entire loss vector, an important class of extensive-form games admits optimal algorithms that are efficiently implementable, with \textsc{Hedge} shown to satisfy both properties in this context \citep{bai2022efficient,fan2024optimality}. Another example is the work of \citet{takimoto2003path}, which provided an efficient implementation of a variant of \textsc{Hedge} for online shortest path problems in DAGs. In the bandit setting, refined variants of \textsc{Hedge} achieve minimax-optimal regret \citep{cesa2012combinatorial,bubeck2012towards}, though a recent work shows they can still be significantly suboptimal for certain combinatorial families~\citep{maiti2025efficient}.

In this paper, we return to the full-information setting and revisit a natural approach for addressing combinatorial problems: treating each element $\bx \in \mathcal{X}$ as an expert and directly applying the \textsc{Hedge} algorithm. This yields a regret bound of $\mathcal{O}\big(\sqrt{T \log |\mathcal{X}|}\big)$. While this naive approach is often computationally intractable due to the size of $\mathcal{X}$, \citet{farina2022kernelized} (building on prior ideas by \citet{takimoto2003path}) recently showed that \textsc{Hedge} and its optimistic variants can be implemented efficiently in a variety of important combinatorial settings that admit an efficient kernel. Given \textsc{Hedge}'s fundamental advantages—both its simplicity and its broad applicability—we are motivated to ask the following natural question:
\begin{center}
\itshape
Does \textsc{Hedge} achieve optimal regret guarantees for every combinatorial set $\mathcal{X} \subseteq \{0,1\}^d$?
\end{center}


\subsection{Our Contributions} 

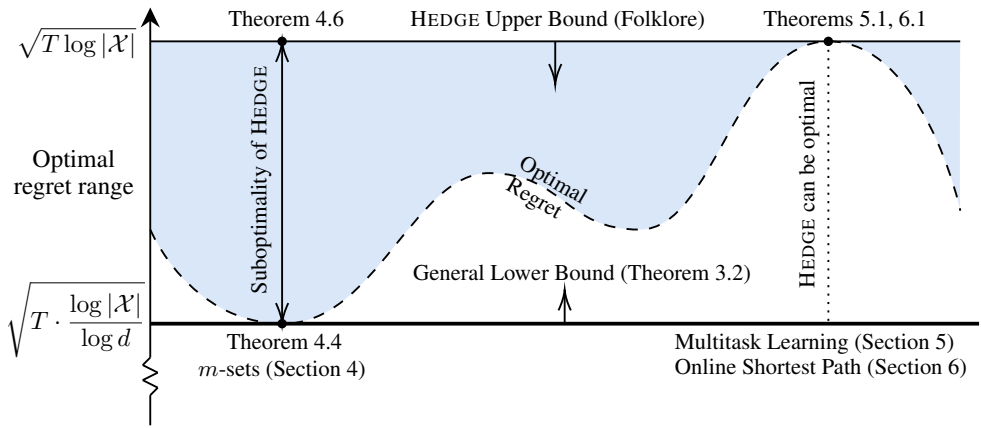
\begin{figure}[bp]
\centering\scalebox{.95}{\tikzset{every picture/.style={line width=0.75pt}} 

\begin{tikzpicture}[x=0.75pt,y=0.75pt,yscale=-1,xscale=1]

\draw  [draw opacity=0][fill={rgb, 255:red, 74; green, 144; blue, 226 }  ,fill opacity=0.21 ] (136,204) .. controls (136.27,204.33) and (156.27,253.33) .. (206,254) .. controls (255.73,254.67) and (276,173.67) .. (316,174) .. controls (356,174.33) and (356,204.33) .. (396,204) .. controls (436,203.67) and (446.27,104.33) .. (496,104) .. controls (545.73,103.67) and (566.27,193.33) .. (566,194) .. controls (565.73,194.67) and (565.8,133.2) .. (566,104) .. controls (525.8,104.2) and (176.2,104.2) .. (136,104) .. controls (136.2,123.2) and (136.2,183.2) .. (136,204) -- cycle ;
\draw [line width=0.75]    (136,272) -- (136,89) ;
\draw [shift={(136,86)}, rotate = 90] [fill={rgb, 255:red, 0; green, 0; blue, 0 }  ][line width=0.08]  [draw opacity=0] (10.72,-5.15) -- (0,0) -- (10.72,5.15) -- (7.12,0) -- cycle    ;
\draw [line width=1.5]    (136,254) -- (576,254) ;
\draw  [dash pattern={on 4.5pt off 4.5pt}]  (136,204) .. controls (136.27,204.33) and (156.27,253.33) .. (206,254) .. controls (255.73,254.67) and (276,173.67) .. (316,174) .. controls (356,174.33) and (356,204.33) .. (396,204) .. controls (436,203.67) and (446.27,104.33) .. (496,104) .. controls (545.73,103.67) and (566.27,193.33) .. (566,194) ;
\draw    (136,104) -- (566,104) ;
\draw    (206,106) -- (206,252) ;
\draw [shift={(206,254)}, rotate = 270] [color={rgb, 255:red, 0; green, 0; blue, 0 }  ][line width=0.75]    (10.93,-3.29) .. controls (6.95,-1.4) and (3.31,-0.3) .. (0,0) .. controls (3.31,0.3) and (6.95,1.4) .. (10.93,3.29)   ;
\draw [shift={(206,104)}, rotate = 90] [color={rgb, 255:red, 0; green, 0; blue, 0 }  ][line width=0.75]    (10.93,-3.29) .. controls (6.95,-1.4) and (3.31,-0.3) .. (0,0) .. controls (3.31,0.3) and (6.95,1.4) .. (10.93,3.29)   ;
\draw  [dash pattern={on 0.84pt off 2.51pt}]  (496,104) -- (496,193.67) -- (496,255) ;
\draw    (351,104) -- (351,125) ;
\draw [shift={(351,127)}, rotate = 270] [color={rgb, 255:red, 0; green, 0; blue, 0 }  ][line width=0.75]    (10.93,-3.29) .. controls (6.95,-1.4) and (3.31,-0.3) .. (0,0) .. controls (3.31,0.3) and (6.95,1.4) .. (10.93,3.29)   ;
\draw    (356,254) -- (356,238) ;
\draw [shift={(356,236)}, rotate = 90] [color={rgb, 255:red, 0; green, 0; blue, 0 }  ][line width=0.75]    (10.93,-3.29) .. controls (6.95,-1.4) and (3.31,-0.3) .. (0,0) .. controls (3.31,0.3) and (6.95,1.4) .. (10.93,3.29)   ;
\draw  [fill={rgb, 255:red, 0; green, 0; blue, 0 }  ,fill opacity=1 ] (204,254) .. controls (204,252.9) and (204.9,252) .. (206,252) .. controls (207.1,252) and (208,252.9) .. (208,254) .. controls (208,255.1) and (207.1,256) .. (206,256) .. controls (204.9,256) and (204,255.1) .. (204,254) -- cycle ;
\draw  [fill={rgb, 255:red, 0; green, 0; blue, 0 }  ,fill opacity=1 ] (204,104) .. controls (204,102.9) and (204.9,102) .. (206,102) .. controls (207.1,102) and (208,102.9) .. (208,104) .. controls (208,105.1) and (207.1,106) .. (206,106) .. controls (204.9,106) and (204,105.1) .. (204,104) -- cycle ;
\draw  [fill={rgb, 255:red, 0; green, 0; blue, 0 }  ,fill opacity=1 ] (494,104) .. controls (494,102.9) and (494.9,102) .. (496,102) .. controls (497.1,102) and (498,102.9) .. (498,104) .. controls (498,105.1) and (497.1,106) .. (496,106) .. controls (494.9,106) and (494,105.1) .. (494,104) -- cycle ;
\draw    (132,276) -- (136,272) ;
\draw    (132,276) -- (140,280) ;
\draw    (132,284) -- (140,280) ;
\draw    (132,284) -- (140,288) ;
\draw    (136,292) -- (140,288) ;
\draw    (136,292) -- (136,308) ;

\draw (334.58,161.9) node [anchor=north west][inner sep=0.75pt]  [font=\footnotesize,rotate=-30] [align=left] {Optimal\\ Regret};
\draw (255,86) node [anchor=north west][inner sep=0.75pt]  [font=\footnotesize] [align=left] {\begin{minipage}[lt]{142.87pt}\setlength\topsep{0pt}
\begin{center}
\textsc{Hedge} Upper Bound (Folklore) 
\end{center}

\end{minipage}};
\draw (240,221) node [anchor=north west][inner sep=0.75pt]  [font=\footnotesize] [align=center] {\begin{minipage}[lt]{185.06pt}\setlength\topsep{0pt}
\begin{center}
General Lower Bound (\cref{thm:regret-lb-general})
\end{center}

\end{minipage}};
\draw (160,272) node [anchor=north west][inner sep=0.75pt]  [font=\footnotesize] [align=left] {$\displaystyle m$-sets (\cref{sec:m-sets})};
\draw (413,258) node [anchor=north west][inner sep=0.75pt]  [font=\footnotesize] [align=left] {Multitask Learning (\cref{sec:multitask})\\Online Shortest Path (\cref{sec:DAG})};
\draw (155,258) node [anchor=north west][inner sep=0.75pt]  [font=\footnotesize] [align=left] {\begin{minipage}[lt]{76.19pt}\setlength\topsep{0pt}
\begin{center}
\cref{thm:m-set-algo}
\end{center}

\end{minipage}};
\draw (145,86) node [anchor=north west][inner sep=0.75pt]  [font=\footnotesize] [align=left] {\begin{minipage}[lt]{92.52pt}\setlength\topsep{0pt}
\begin{center}
\cref{thm:m-set-hedge-lb}
\end{center}

\end{minipage}};
\draw (188,255) node [anchor=north west][inner sep=0.75pt]  [font=\footnotesize,rotate=-270] [align=left] {\begin{minipage}[lt]{113.85pt}\setlength\topsep{0pt}
\begin{center}
Suboptimality of \textsc{Hedge}
\end{center}

\end{minipage}};
\draw (460,86) node [anchor=north west][inner sep=0.75pt]  [font=\footnotesize] [align=left] {Theorems \ref{thm:multitask}, \ref{thm:dag-minimax}};
\draw (479,255) node [anchor=north west][inner sep=0.75pt]  [font=\footnotesize,rotate=-270] [align=left] {\begin{minipage}[lt]{110.68pt}\setlength\topsep{0pt}
\begin{center}
\textsc{Hedge} can be optimal
\end{center}

\end{minipage}};
\draw (35,230) node [anchor=north west][inner sep=0.75pt]   [align=left] {\begin{minipage}[lt]{89.45pt}\setlength\topsep{0pt}
\begin{center}
$\displaystyle \sqrt{T\cdot\frac{\log |\mathcal{X} |}{\log d}}$
\end{center}

\end{minipage}};
\draw (57,95) node [anchor=north west][inner sep=0.75pt]   [align=left] {\begin{minipage}[lt]{57.36pt}\setlength\topsep{0pt}
\begin{center}
$\displaystyle \sqrt{T\log |\mathcal{X} |}$
\end{center}

\end{minipage}};
\draw (62,160) node [anchor=north west][inner sep=0.75pt]   [align=center] {Optimal\\regret range};

\end{tikzpicture}}
\caption{An overview of our results. The $x$-axis indexes different combinatorial decision sets $\mathcal{X} \subseteq \{0,1\}^d$, and the $y$-axis shows the optimal regret over $T$ rounds. We show that \textsc{Hedge} is near-optimal for all $\mathcal{X}$, up to a $\sqrt{\log d}$ factor. For $m$-sets, \textsc{Hedge} is provably suboptimal by a factor of $\sqrt{\log d}$, whereas in structured settings such as online multitask learning and important families of DAGs, it is in fact optimal.}\label{fig:hedge}
\end{figure}

In this paper, we address the above question by establishing the following results:

\begin{itemize}[left=3mm]
\item \textbf{Universal Near-Optimality:} In Theorem~\ref{thm:regret-lb-general}, we show that \textsc{Hedge} is universally near optimal for combinatorial games by proving that the regret lower bound for any algorithm on any combinatorial set $\mathcal{X} \subseteq \{0,1\}^d$ is $\Omega\big(\max\{\sqrt{T\log |\mathcal{X}| / \log d},\sqrt{T}\}\big)$. 
\item \textbf{Suboptimality in Specific Cases:} In Theorems~\ref{thm:m-set-algo} and \ref{thm:m-set-hedge-lb}, we show that the lower bound from Theorem~\ref{thm:regret-lb-general} is tight for a natural class of combinatorial sets $\mathcal{X}$, and that \textsc{Hedge} is provably suboptimal for these sets. Specifically, for $m$-sets with $\log d \leq m \leq \sqrt{d}$, we prove that \textsc{Hedge} necessarily incurs a regret of $\Omega\big(\sqrt{T \log |\mathcal{X}|}\big)$, while we design an Online Mirror Descent (\textsc{OMD}) algorithm that achieves an optimal regret of $\mathcal{O}\big(\sqrt{T \log |\mathcal{X}| / \log d}\big)$.
\item \textbf{Optimality in Specific Cases:} In Theorem~\ref{thm:multitask}, we show that \textsc{Hedge} is optimal for a specific class of combinatorial sets $\mathcal{X}$. In particular, for online multitask learning—which generalizes the classical $K$-experts problem—we prove that any algorithm must incur a regret of $\Omega\big(\sqrt{T \log |\mathcal{X}|}\big)$.
\end{itemize}

Beyond these results, we further investigate the optimality of \textsc{Hedge} and its connection to regularization-based algorithms in the structured setting of directed acyclic graphs (DAGs), which capture a broad class of combinatorial problems—including extensive-form games, resource allocation game, $m$-sets, multitask learning, and the $\{0,1\}^d$ hypercube:

\begin{itemize}[left=3mm]
    \item In Theorem~\ref{thm:dag-minimax}, we show that \textsc{Hedge} is minimax optimal when $\mathcal{X}$ corresponds to the set of paths from source to sink in a DAG.
    \item In Section~\ref{sec:omd-dilated}, we show that \textsc{OMD} with the dilated entropy regularizer achieves a minimax-optimal regret of $\mathcal{O}\big(\sqrt{T \log |\mathcal{X}|}\big)$ for DAGs.    
    \item In Theorem~\ref{thm:hedge-omd-equivalence}, we show that \textsc{OMD} with the dilated entropy regularizer is iterate-equivalent to \textsc{Hedge} on DAGs—thereby inheriting the regret guarantees of \textsc{Hedge} and demonstrating that \textsc{Hedge} can be implemented efficiently on DAGs via OMD. 
\end{itemize}



\subsection{Related Works}

\textbf{Prediction with Expert Advice.}
One of the earliest works on online prediction was by \citet{littlestone1994weighted}, who introduced the Weighted Majority algorithm. \citet{cesa1997use} extended this line of work by studying the setting where experts' predictions lie in the interval $[0,1]$, while the outcomes are binary. Subsequently, \citet{freund1997decision} addressed the more general setting where both predictions and outcomes lie in $[0,1]$. They proposed the \textsc{Hedge} algorithm and established a regret bound of $\mathcal{O}(\sqrt{T \log K})$, where $T$ is the time horizon and $K$ is the number of experts.

This foundational result gave rise to several important subsequent works. \citet{erven2011adaptive} introduced AdaHedge, a variant of \textsc{Hedge} with adaptive learning rates that achieves a regret of roughly $\sqrt{L_T^* \log K}$, where $L_T^*$ is the cumulative loss of the best expert. \citet{krichene2015hedge} studied a continuous version of \textsc{Hedge} for online optimization over compact convex sets $S \subset \mathbb{R}^d$. In the stochastic setting, \citet{mourtada2019optimality} analyzed \textsc{Hedge} with decreasing learning rates and obtained a regret bound of $\mathcal{O}(\log N / \Delta)$, where $\Delta$ denotes the sub-optimality gap. For the bandit feedback setting, \citet{auer2002nonstochastic} developed \textsc{Exp3}, a bandit-feedback variant of \textsc{Hedge} that achieves a regret of $\mathcal{O}(\sqrt{KT})$.

\textbf{Combinatorial Settings.} Online learning in combinatorial games has recently received considerable attention. \citet{farina2022kernelized} showed that \textsc{Hedge} and its optimistic variants \citep{chiang2012online,rakhlin2013online} can be implemented efficiently when the combinatorial game admits an efficient kernel. Examples of such problems include extensive-form games, resource allocation games, $m$-sets, and, more generally, online shortest path problems in directed acyclic graphs (DAGs). \citet{hoda2010smoothing} introduced the \textit{dilated entropy} regularizer for extensive-form games and analyzed its properties (cf. also \citet{farina2025better}). Building on this, \citet{bai2022efficient} demonstrated that Online Mirror Descent (OMD) with the a specific variant of the dilated entropy regularizer is iterate-equivalent to \textsc{Hedge} in extensive-form games. \citet{fan2024optimality} subsequently provided the first OMD-based regret analysis for this regularizer, matching the known bounds for \textsc{Hedge}. Their analysis introduced a new norm, called the \textit{treeplex norm}, to facilitate the regret bounds.

In the bandit feedback setting, \citet{cesa2012combinatorial} analyzed online learning over specific combinatorial sets $\mathcal{X} \subseteq \{0,1\}^d$, and proposed a variant of \textsc{Hedge} called \textsc{ComBand}, achieving an expected regret bound of $\cO\big(\sqrt{dT \log |\mathcal{X}|}\big)$ for those sets. \citet{bubeck2012towards} subsequently extended this result to \textit{any} combinatorial set $\mathcal{X} \subseteq \{0,1\}^d$, using a variant of \textsc{Hedge} known as \textsc{EXP2} with John's exploration. Recently, \citet{maiti2025efficient} showed that this bound is sub-optimal by a factor of $d^{1/4}$ for a specific family of directed acyclic graphs, thereby demonstrating that these variants of \textsc{Hedge} can, in fact, be substantially sub-optimal.

While the above prior works—as well as our own—focus on loss vectors $\by_t$ such that $\langle \bx, \by_t \rangle \in [-1,1]$ for all $\bx \in \mathcal{X}$, other works \citep{takimoto2003path,koolen2010hedging,rahmanian2017online} consider coordinate-wise bounded losses, where each $\by_t[i] \in [0,1]$ for all $i \in [d]$. Under coordinate-wise bounded losses, \citet{takimoto2003path} were the first to implement a variant of \textsc{Hedge} efficiently on DAGs by leveraging the additivity of losses across the edges of a path. \citet{koolen2010hedging} subsequently analyzed the \textsc{Component Hedge} algorithm over various combinatorial sets, including $m$-sets and DAGs. \citet{rahmanian2017online} further extended \textsc{Component Hedge} to the $k$-multipaths problem.

\textbf{Rademacher Complexity.}
\citet{orabona2015optimal} provided a non-asymptotic lower bound of $\Omega(\sqrt{T \log N})$ for the experts problem by analyzing the supremum of a sum of Rademacher random variables. A series of works \citep{rakhlin2015online, rakhlin2012relax, foster2015adaptive} extended this analysis to more general online learning problems—including combinatorial settings—by characterizing regret in terms of \textit{sequential Rademacher complexity}. \citet{srebro2011universality} showed that there always exists an instance of Follow-the-Regularized-Leader (FTRL) that is nearly optimal for online linear optimization. This was recently strengthened by \citet{gatmiry2024computing}, who showed that an optimal FTRL instance always exists for online linear optimization. However, the construction of an optimal regularizer for FTRL may incur significant computational overhead.

\section{Preliminaries}

In this paper, we study the repeated online decision-making problem in a combinatorial setting. Denote by $d$ the dimension of the problem. The agent is given a set of discrete actions $\cX \subseteq \{0, 1\}^d$. At each round $t = 1, 2, \dots$, the agent selects an action $\bx_t $ from the decision set $\cX$. The environment simultaneously chooses a loss vector $\by_t \in \cY$, potentially adversarially based on the interaction history $\cF_t := \{(\bx_\tau, \by_\tau)\}_{\tau=1}^{t - 1}$. The agent then incurs a loss of $\langle \bx_t, \by_t \rangle$ and observes the loss vector $\by_t$. The goal of the agent is to minimize the total loss over $T$ rounds, or equivalently, to minimize the cumulative regret:
\[
    \mathrm{Regret}(T) := \sum_{t=1}^T \langle \bx_t, \by_t \rangle - \min_{\bx_* \in \cX}\sum_{t=1}^T \langle \bx_*, \by_t \rangle.
\]

We focus on the combinatorial game setting, in which the loss vector is restricted so that the loss incurred at each round is bounded within $[-1, 1]$. In this case, the loss vector set $\cY$ is the polar set of the convex hull $\co(\cX)$.
We note that this assumption covers various settings in the literature, including extensive-form games \citep{farina2022kernelized, bai2022efficient}.
Formally, we make the following assumption:
\begin{assumption}\label{as:combintorial-game}
The loss vector set is defined as $\cY := \big\{\by \in \RR^d : \max_{\bx \in \cX}|\langle \bx, \by \rangle | \leq 1 \big\}$.
\end{assumption}

\paragraph{The Hedge Algorithm}

A classical approach for solving this problem is the \textsc{Hedge} algorithm \citep{freund1997decision}, also known as Multiplicative Weight Updates (\textsc{MWU}). In this algorithm, the agent chooses actions in a randomized manner based on the past performance of the actions. Specifically, let $\eta > 0$ be the learning rate, the probability of choosing an action $\bx$ in round $t$ is proportional to
\[
\PP_t(\bx) \propto w_t(\bx) := \exp\bigg(-\eta \sum_{\tau = 1}^{t-1} \langle \bx, \by_\tau \rangle \bigg), \qquad \forall \bx \in \cX.
\]

A classical result shows under learning rate $\eta := \sqrt{\log |\cX| / T} $, this algorithm has a regret upper bound of $\cO\big(\sqrt{T \log |\cX|}\big)$ in the combinatorial game setting (i.e., under Assumption~\ref{as:combintorial-game}). Furthermore, the algorithm requires $\cO(d|\cX|)$ time per round to compute the updates. This may be exponential in $d$ when only a succinct representation of the decision set $\cX$ is provided.
It is known that when the \emph{kernel} of the decision set $\cX$ can be computed efficiently, it is possible to simulate the \textsc{MWU} algorithm in polynomial time using the \textsc{Kernelized MWU} algorithm \citep{farina2022kernelized}.

\paragraph{Proximal Methods}

We recall the foundational concepts and notations commonly used in proximal optimization methods. Let $\cX$ denote the decision set. The proximal methods is built upon on some regularizer $\varphi: \co(\cX) \to \RR$, which is required to be $\mu$-strongly convex with respect to a chosen norm $\|\cdot\|$ over $\co(\cX)$. Such a function naturally gives rise to a generalized measure of divergence known as the \emph{Bregman divergence}, defined for any vectors $\bx', \bx \in \cX$ as:
\[
    \cD_{\varphi}(\bx'\mmid\bx) := \varphi(\bx') - \varphi(\bx) - \langle \nabla \varphi(\bx), \bx' - \bx \rangle.
\]


Among proximal methods, a general approach to solving the online decision problem is the Online Mirror Descent (\textsc{OMD}) algorithm \citep{chiang2012online}. Let $\eta > 0$ be the learning rate, and let $\varphi: \co(\cX) \rightarrow \RR$ be a strongly convex regularizer. The algorithm maintains a policy in each round $t$. In the first round, the policy is set so the regularizer is unique minimizer: $\tilde \bx_1 \leftarrow \argmin_{\bx \in \co(\cX)} \varphi( \bx)$. For each round $t = 2, 3, \dots$, the agent takes proximal step:
\[
    \tilde \bx_{t} \leftarrow \argmin_{ \bx \in \co(\cX)} \Big\{\eta\,\langle \by_{t - 1},  \bx \rangle + \cD_\varphi(\bx\mmid\tilde \bx_{t - 1}) \Big\}
\]
Then, the agent draws and plays an action $\bx_t \in \cX$ by matching its expectation to the proposed policy, i.e., $\Expt_t[\bx_t] = \tilde \bx_t$. It is known that this algorithm achieves the following regret upper bound:
\begin{theorem}[Regret Bound for \textsc{OMD}, \citep{rakhlin2013online, syrgkanis2015fast}]
    \label{thm:omd-regret-ub-general}
    Let $\|\cdot\|$ and $\|\cdot\|_*$ be a pair of primal-dual norm defined on $\RR^d$. Let $\varphi$ be a DGF that is $\mu$-strongly convex on $\|\cdot\|$. Denote $\by_t$ as the reward gradient received in episode $t$. 
    The cumulative regret of running \textsc{OMD} with DGF $\varphi$ and learning rate $\eta > 0$ can be upper bounded by 
    \begin{align*}
        \tilde \Regret(T) &:= \sum_{t=1}^T \langle \tilde \bx_t, \by_t \rangle - \min_{\bx_* \in \cX}\sum_{t=1}^{T} \langle \bx_*, \by_t\rangle \leq \frac{1}{\eta} \cD_\varphi(\bx_*\mmid\tilde \bx_1) + \frac{\eta}{2\mu}\sum_{t=1}^{T} \|\by_t\|_*^2.
    \end{align*}
\end{theorem}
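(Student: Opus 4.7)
The plan is to bound the instantaneous regret $\langle \tilde{\bx}_t - \bx_*, \by_t \rangle$ at each round using the first-order optimality of the proximal step, and then telescope. Reindexing the update rule one step forward as $\tilde{\bx}_{t+1} = \argmin_{\bx \in \co(\cX)}\{\eta \langle \by_t, \bx \rangle + \cD_\varphi(\bx \mmid \tilde{\bx}_t)\}$, the first-order optimality condition yields, for every $\bx \in \co(\cX)$,
\[
\langle \eta \by_t + \nabla \varphi(\tilde{\bx}_{t+1}) - \nabla \varphi(\tilde{\bx}_t),\; \bx - \tilde{\bx}_{t+1} \rangle \geq 0.
\]
Setting $\bx = \bx_*$ and rewriting the $\nabla\varphi$ differences via the standard three-point identity for Bregman divergences gives the per-step inequality
\[
\eta \langle \by_t,\, \tilde{\bx}_{t+1} - \bx_* \rangle \leq \cD_\varphi(\bx_* \mmid \tilde{\bx}_t) - \cD_\varphi(\bx_* \mmid \tilde{\bx}_{t+1}) - \cD_\varphi(\tilde{\bx}_{t+1} \mmid \tilde{\bx}_t).
\]

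Next, I would bridge the gap between the iterate actually played at round $t$, namely $\tilde{\bx}_t$, and the one-step-ahead iterate $\tilde{\bx}_{t+1}$ appearing in the bound above. Writing $\langle \by_t, \tilde{\bx}_t - \bx_* \rangle = \langle \by_t, \tilde{\bx}_t - \tilde{\bx}_{t+1} \rangle + \langle \by_t, \tilde{\bx}_{t+1} - \bx_* \rangle$, I would control the first summand by combining the Fenchel-Young inequality under the primal-dual pair $(\|\cdot\|, \|\cdot\|_*)$ with the $\mu$-strong convexity of $\varphi$, which provides $\cD_\varphi(\tilde{\bx}_{t+1} \mmid \tilde{\bx}_t) \geq \tfrac{\mu}{2} \|\tilde{\bx}_{t+1} - \tilde{\bx}_t\|^2$. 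Choosing the Fenchel-Young splitting weight to be $\eta/\mu$ makes the quadratic $\|\tilde{\bx}_{t+1} - \tilde{\bx}_t\|^2$ term cancel exactly against the strong-convexity lower bound on $\cD_\varphi(\tilde{\bx}_{t+1}\mmid \tilde{\bx}_t)$, yielding the ``stability'' estimate
\[
\langle \by_t,\, \tilde{\bx}_t - \tilde{\bx}_{t+1} \rangle \leq \frac{\eta}{2\mu}\,\|\by_t\|_*^2 + \frac{1}{\eta}\,\cD_\varphi(\tilde{\bx}_{t+1} \mmid \tilde{\bx}_t).
\]
Substituting this into the decomposition absorbs the negative $-\cD_\varphi(\tilde{\bx}_{t+1}\mmid \tilde{\bx}_t)$ from the per-step inequality and produces
\[
\langle \by_t,\, \tilde{\bx}_t - \bx_* \rangle \leq \frac{1}{\eta}\bigl(\cD_\varphi(\bx_* \mmid \tilde{\bx}_t) - \cD_\varphi(\bx_* \mmid \tilde{\bx}_{t+1})\bigr) + \frac{\eta}{2\mu}\,\|\by_t\|_*^2.
\]

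Finally, summing over $t = 1, \dots, T$ telescopes the Bregman divergences to $\tfrac{1}{\eta}\bigl(\cD_\varphi(\bx_* \mmid \tilde{\bx}_1) - \cD_\varphi(\bx_* \mmid \tilde{\bx}_{T+1})\bigr)$; the trailing term is nonnegative and can be dropped, which yields exactly the stated bound. There is no deep obstacle here since the result is classical; the only delicate point is the indexing mismatch between the played iterate $\tilde{\bx}_t$ and the proximal step that incorporates $\by_t$, and handling it cleanly is precisely why the stability argument in the second paragraph uses both Fenchel-Young and $\mu$-strong convexity in tandem. This same step is also what pins down the constant $1/(2\mu)$ in the final regret bound.
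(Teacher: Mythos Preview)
Your argument is the standard and correct proof of this classical bound: first-order optimality of the prox step, the three-point identity, then the Fenchel--Young / strong-convexity ``stability'' step to pass from $\tilde\bx_{t+1}$ back to $\tilde\bx_t$, and finally telescoping. There is nothing to compare against, however, because the paper does not prove this theorem at all---it is stated in the preliminaries as a known result with citations to \citep{rakhlin2013online,syrgkanis2015fast} and is simply invoked later (e.g., in the proof of Theorem~\ref{thm:m-set-algo}). Your write-up is exactly the textbook derivation those references contain.
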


\paragraph{General Notation} 
We use lowercase boldface letters, such as $\bx$, to denote vectors. 
The notation $|\bx|$ denotes the element-wise absolute value. For an index set $\cC$, let $\bx[\cC] \in \RR^{\cC}$ denote the subvector of $\bx$ restricted to entries indexed by $\cC$, and let $|\cC|$ denote the cardinality of the set. We write $\llbracket k \rrbracket := \{1, 2, \dots, k\}$ and use $\emptyset$ to denote the empty set. The probability simplex over a finite set $\cC$ is denoted by $\cP(\cC)$. We use $\log x$ to denote the logarithm of $x$ in base $2$ and $\ln x$ to denote the natural logarithm of $x$. For non-negative sequences $\{a_n\}$ and $\{b_n\}$, we write $a_n \leq \cO(b_n)$ or equivalently $b_n \geq \Omega(a_n)$ to indicate the existence of a global constant $C > 0$ such that $a_n \leq Cb_n$ for all $n > 0$. Similarly, we write $a_n = \Theta(b_n)$ to indicate the existence of global constants $C_1, C_2 > 0$ such that $C_1 b_n \leq a_n \leq C_2 b_n$ for all $n > 0$. Lastly, we denote by $\co(\cX)$ the convex hull of a set $\cX$.


\section{Universal Near Optimality of Hedge}\label{sec:near-optimality-hedge}

In this section, we show that for any given combinatorial decision set $\cX \subseteq \{0, 1\}^d$, the \textsc{Hedge} algorithm achieves a near optimal regret bound in the combinatorial game setting. We begin by stating the following classical result.



\begin{lemma}[Sauer–Shelah Lemma \citep{sauer1972density, shelah1972combinatorial}] \label{lem:ss-lem}
    Let $\cC$ be a family of sets whose union has $n$ elements.  A set $S$ is said to be \emph{shattered} by $\cC$ if every subset of $S$ can be obtained as the intersection $S \cap C$ for some set $C \in \cC$. $\cC$ shatters a set of size $k$, if the number of sets in the family satisfies 
    \[|\cC| > \sum_{i=0}^{k-1} \binom{n}{i}. \]
\end{lemma}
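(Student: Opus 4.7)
I would prove the contrapositive: if $\cC$ is a family over an $n$-element universe $U$ that shatters no $k$-set, then $|\cC| \leq \sum_{i=0}^{k-1}\binom{n}{i}$. The cleanest route is the classical double induction on $n$ and $k$ due to Sauer.

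The base cases are immediate: for $k = 0$ the hypothesis forces $\cC = \emptyset$ (since any nonempty family shatters the empty set), matching the empty sum; for $n = 0$ one has $|\cC| \leq 1 = \sum_{i=0}^{k-1}\binom{0}{i}$ for every $k \geq 1$. For the inductive step I would fix any $x \in U$ and form two auxiliary families over the smaller universe $U' := U \setminus \{x\}$:
\[
    \mathcal{A} := \{C \setminus \{x\} : C \in \cC\}, \qquad \mathcal{B} := \{C \subseteq U' : \text{both } C \text{ and } C \cup \{x\} \text{ lie in } \cC\}.
\]
Partitioning $\cC$ according to whether $x \in C$, one checks $|\cC| = |\mathcal{A}| + |\mathcal{B}|$: each element of $\mathcal{B}$ contributes two sets to $\cC$ (differing only on $x$), while each element of $\mathcal{A} \setminus \mathcal{B}$ contributes exactly one.

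Two structural facts then close the induction. First, any $S \subseteq U'$ shattered by $\mathcal{A}$ is also shattered by $\cC$, since each $T \in \mathcal{A}$ lifts to some $C \in \cC$ with $C \cap S = T \cap S$; hence $\mathcal{A}$ shatters no $k$-set. Second, any $S \subseteq U'$ shattered by $\mathcal{B}$ has $S \cup \{x\}$ shattered by $\cC$: each $B \in \mathcal{B}$ lifts to a pair $\{B, B \cup \{x\}\} \subseteq \cC$ differing exactly on $x$, so for any target $T' \subseteq S \cup \{x\}$ one picks the $B \in \mathcal{B}$ with $B \cap S = T' \setminus \{x\}$ and then selects $B$ or $B \cup \{x\}$ according to whether $x \in T'$; consequently $\mathcal{B}$ shatters no $(k-1)$-set. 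The induction hypothesis then yields $|\mathcal{A}| \leq \sum_{i=0}^{k-1}\binom{n-1}{i}$ and $|\mathcal{B}| \leq \sum_{i=0}^{k-2}\binom{n-1}{i}$, and summing together with Pascal's identity $\binom{n-1}{i} + \binom{n-1}{i-1} = \binom{n}{i}$ gives the stated bound.

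The main obstacle is verifying the second structural fact: one must argue that the ``doubled'' lifts guaranteed by the definition of $\mathcal{B}$ realize every intersection pattern with $S \cup \{x\}$ simultaneously, and confirm that this shift reduces the forbidden shattering size from $k$ to $k-1$ rather than preserving it. Everything else is a clean induction together with the Pascal arithmetic.
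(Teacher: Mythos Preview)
The paper does not supply its own proof of this lemma: it is stated as a classical result with citations to \citet{sauer1972density} and \citet{shelah1972combinatorial}, and then invoked as a black box in the proof of Theorem~\ref{thm:regret-lb-general}. Your proposal reproduces the standard double-induction argument (essentially Sauer's original proof), and the details you give---the counting identity $|\cC| = |\mathcal{A}| + |\mathcal{B}|$, the two shattering-preservation facts, and the Pascal arithmetic---are all correct. There is simply nothing in the paper to compare against.
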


By the Sauer–Shelah Lemma, there exists an index set $\mathcal{I} \subseteq \llbracket d \rrbracket$ of size $\Omega(\log |\mathcal{X}| / \log d)$ such that the restriction of $\mathcal{X}$ to the coordinates in $\mathcal{I}$ equals the full hypercube $\{0,1\}^{\mathcal{I}}$. Consequently, any hard instance with loss vectors supported only on coordinates in $\mathcal{I}$ is at least as hard as the corresponding instance of the combinatorial game over the hypercube $\mathcal{X}' := \{0,1\}^{\mathcal{I}}$. 
As any algorithm suffers a regret lower bound of $\Omega\big(\sqrt{T\,|\mathcal{I}|}\big)$ in the combinatorial game over the hypercube.
This yields a regret lower bound of $\Omega\big(\sqrt{T \log |\mathcal{X}| / \log d}\big)$.

We formally state our main result in the following theorem. We defer the proof to Appendix \ref{sec:near-optimality-hedge-proofs}.

\begin{Theorem}{thm:regret-lb-general}
    Let $\cX \subseteq \{0, 1\}^d$ be a decision set and let $\cY$ be the corresponding loss vector set that satisfies Assumption~\ref{as:combintorial-game}. For any $T \geq 1$ and any Algorithm \textsc{Alg}, there exists a sequence of loss vectors $\by_1,\by_2,\ldots,\by_T\in \mathcal{Y}$ such that the algorithm incurs an expected regret of at least 
    \[
    \Expt[\mathrm{Regret}(T)]=\Expt\left[\sum_{t=1}^T \langle \bx_t, \by_t \rangle - \min_{\bx_* \in \cX}\sum_{t=1}^T \langle \bx_*, \by_t \rangle\right] \geq \Omega\Bigg(\max\Bigg\{\sqrt{T \cdot\frac{\log |\cX|}{\log d}},\sqrt{T}\Bigg\}\Bigg).
    \]
    The expectation is taken over any potential randomness of the algorithm.
\end{Theorem}

\section{The Sub-Optimality of Hedge on $m$-Sets} \label{sec:m-sets}
In this section, we consider a specific decision set $\cX$—namely, the family of $m$-sets—to illustrate the suboptimality of the \textsc{Hedge} algorithm. For an integer $m \in \llbracket d/2 \rrbracket$, the $m$-sets problem corresponds to the decision set $\cX := \big\{ \bx \in \{0, 1\}^d : \sum_{i=1}^d \bx[i] = m \big\}$. We show that \textsc{OMD}, with a suitable regularizer, matches the regret lower bound from Theorem~\ref{thm:regret-lb-general} when $\log d \leq m \leq d/2$, whereas \textsc{Hedge} suffers a regret lower bound of $\Omega(\sqrt{T \log |\cX|})$, establishing a suboptimality gap of $\sqrt{\log d}$.

\subsection{The Regret Upper Bound of $m$-Sets}
We begin by presenting our OMD algorithm for $m$-sets, for any $m\in\llbracket d/2\rrbracket$. Previous work \citep{srebro2011universality} shows that there always exists a regularizer that enables the OMD algorithm to achieve a near-optimal regret bound. However, the regularizer is not constructive, and the corresponding regret bound is implicit. Instead, we need to construct a regularizer suitable for the decision set so that the regret bound in Theorem~\ref{thm:omd-regret-ub-general} is minimized.
Specifically, we analyze the \textsc{OMD} algorithm with the following regularizer:
\begin{equation}
    \varphi(\bx) := \sum_{i=1}^d \left(\bx[i]^2 + \frac{1}{m} \bx[i] \ln \bx[i]\right).\label{eq:msets-regularizer}
\end{equation}



According to Theorem~\ref{thm:omd-regret-ub-general}, it suffices to pick a pair of primal-dual norm $\|\cdot\|$ and $\|\cdot\|_*$ and analyze the strong convexity of $\varphi$ and also the vector norm $\|\by_t\|_*$. We define a pair of dual-primal norms: 
\begin{align*}
    \|\bz\|_* := \max_{\bx \in \cX} |\langle \bx, \bz \rangle|, \qquad
    \|\bz\| := \max_{\|\by\|_*\leq 1} \langle \by, \bz\rangle.
\end{align*}



First, we state the following proposition that establishes an upper bound on the primal norm $\|\bz\|$.
The proof is done by direct calculation, and we defer the details to Appendix~\ref{sec:m-sets-proofs}.
\begin{Proposition}{prop:msets-reward-set}
For any vector $\bz \in \RR^d$, the primal norm $\|\cdot\|$ is upper bounded by the $\ell_1$ and $\ell_\infty$ norms together, namely, 
\[
    \|\bz\| \leq 3  \|\bz\|_\infty + \frac{1}{m} \|\bz\|_1.
\]
\end{Proposition}

By applying the above proposition, we establish the strong convexity of the regularizer $\varphi$ with respect to the primal norm. We defer the proof to Appendix~\ref{sec:m-sets-proofs}.
\begin{Lemma} {lm:msets-convexity}
    The function $\varphi$ is $1/9$-strongly convex with respect to the primal norm $\|\cdot\|$.
\end{Lemma}

The following lemma bounds the Bregman divergence under the regularizer $\varphi$. 
\begin{Lemma}{lm:msets-range}
    We have
    $
    \cD_\varphi(\bx_*\mmid\tilde \bx_1) \leq m + \ln(d/m),
    $
    for any vector $\bx_* \in \cX$.
\end{Lemma}

We defer the proof to Appendix~\ref{sec:m-sets-proofs}.
Using the above results, we are able to establish the regret upper bound for running \textsc{OMD} with the regularizer defined in \eqref{eq:msets-regularizer}.
\begin{theorem}\label{thm:m-set-algo}
    Let $1\leq m\leq d/2$. With the choice $\eta := \sqrt{2(m + \ln (d/m)) / (9T)}$, 
    the expected regret of running \textsc{OMD} with the regularizer in \eqref{eq:msets-regularizer} over $m$-sets is upper bounded by 
    $$\Expt[\mathrm{Regret}(T)] \leq \cO \Big(\sqrt{Tm + T\log(d/m)} \Big).$$
\end{theorem}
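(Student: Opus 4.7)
The plan is to apply the generic \textsc{OMD} regret bound in Theorem~\ref{thm:omd-regret-ub-general} and then simply combine the three structural facts already established: the strong convexity of $\varphi$ (Lemma~\ref{lm:msets-convexity}), the Bregman divergence bound at the initial iterate (Lemma~\ref{lm:msets-range}), and the dual-norm bound on the loss vectors that comes for free from Assumption~\ref{as:combintorial-game}. There should be no substantially new work beyond an optimization over the learning rate.

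First I would reduce the expected regret of the actual (randomized) algorithm to the deterministic regret of the policy iterates. Since the agent draws $\bx_t$ with $\Expt_t[\bx_t]=\tilde\bx_t$ and the loss is linear, we have
\[
\Expt[\Regret(T)] \;=\; \Expt\!\left[\sum_{t=1}^T \langle \tilde\bx_t,\by_t\rangle - \min_{\bx_*\in\cX}\sum_{t=1}^T \langle \bx_*,\by_t\rangle\right] \;=\; \Expt[\tilde\Regret(T)],
\]
so it suffices to bound $\tilde\Regret(T)$ pointwise. I would then invoke Theorem~\ref{thm:omd-regret-ub-general} with the primal-dual pair $(\|\cdot\|,\|\cdot\|_*)$ introduced in Section~\ref{sec:m-sets}, yielding
\[
\tilde\Regret(T) \;\leq\; \frac{1}{\eta}\,\cD_\varphi(\bx_*\mmid\tilde\bx_1) \;+\; \frac{\eta}{2\mu}\sum_{t=1}^T\|\by_t\|_*^2.
\]

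Next, I would plug in the three already-proved facts. By Lemma~\ref{lm:msets-convexity}, the regularizer $\varphi$ is $\mu=1/9$-strongly convex with respect to $\|\cdot\|$. By Lemma~\ref{lm:msets-range}, the initial Bregman divergence is bounded: $\cD_\varphi(\bx_*\mmid\tilde\bx_1)\leq m+\ln(d/m)$. Finally, by the very definition of $\|\cdot\|_*$ together with Assumption~\ref{as:combintorial-game}, we get $\|\by_t\|_* = \max_{\bx\in\cX}|\langle\bx,\by_t\rangle|\leq 1$ for every round $t$. Combining these three inputs gives
\[
\tilde\Regret(T) \;\leq\; \frac{m+\ln(d/m)}{\eta} \;+\; \frac{9\,\eta\,T}{2}.
\]

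The last step is to tune $\eta$. Choosing $\eta=\sqrt{2(m+\ln(d/m))/(9T)}$ balances the two terms and yields $\tilde\Regret(T)\leq 3\sqrt{2T(m+\ln(d/m))}=\mathcal{O}\!\big(\sqrt{Tm+T\log(d/m)}\big)$, as claimed. There is no genuine obstacle here: all the hard work is front-loaded into Proposition~\ref{prop:msets-reward-set} and Lemmas~\ref{lm:msets-convexity}--\ref{lm:msets-range}; the only minor care required is the reduction from expected to deterministic regret via $\Expt_t[\bx_t]=\tilde\bx_t$, and the switch from $\ln$ to $\log$ inside the $\mathcal{O}(\cdot)$, which only loses a constant factor.
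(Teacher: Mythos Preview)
Your proposal is correct and follows essentially the same approach as the paper: both invoke Theorem~\ref{thm:omd-regret-ub-general}, plug in $\mu=1/9$ from Lemma~\ref{lm:msets-convexity}, the Bregman bound from Lemma~\ref{lm:msets-range}, and $\|\by_t\|_*\le 1$ from the definition of the dual norm, then reduce $\Expt[\Regret(T)]$ to $\tilde\Regret(T)$ via $\Expt_t[\bx_t]=\tilde\bx_t$ and optimize over $\eta$ to obtain $\sqrt{18T(m+\ln(d/m))}$.
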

\begin{proof}
    According to the definition of $\|\cdot\|_*$, we have that $\|\by_t\|_* \leq 1$. 
    Combining this with Lemma~\ref{lm:msets-convexity} and Lemma~\ref{lm:msets-range}, and applying Theorem~\ref{thm:omd-regret-ub-general} together with $\Expt[\bx_t] = \tilde \bx_t$, we conclude that 
    \begin{align*}
        \Expt[\mathrm{Regret}(T)]&\leq \frac{1}{\eta} \cD_\varphi(\bx_*\mmid\tilde \bx_1) + \frac{\eta}{2\mu}\sum_{t=1}^{T} \|\by_t\|_*^2\\
        &\leq \frac{1}{\eta}\big(m + \ln (d/m) \big) + \frac{9\eta}{2} \cdot T \\
        &\leq \sqrt{18Tm + 18T\ln (d/m)},
    \end{align*}
    where the last inequality is given by the choice $\eta$. 
\end{proof}


We show that this regret upper bound is in fact optimal by establishing a matching lower bound. Specifically, we construct our lower bound using the hard instance from Theorem~\ref{thm:regret-lb-general} along with the hard instance for the $K$-experts problem (see Lemma~\ref{lem:k-experts}). Formally, we show the following:




\begin{Theorem}{thm:m-set-all-algo-lb}
    Consider integers $m,d,T$ such that $1\leq m\leq d/2\leq \exp(T/3)/2$.  For the $m$-sets problem and any Algorithm \textsc{Alg}, there exists a sequence of loss vectors $\by_1,\by_2,\ldots,\by_T\in \mathcal{Y}$ such that the algorithm incurs a expected regret of at least 
    $$
    \Expt[\mathrm{Regret}(T)]=\Expt\left[\sum_{t=1}^T \langle \bx_t, \by_t \rangle - \min_{\bx_* \in \cX}\sum_{t=1}^T \langle \bx_*, \by_t \rangle\right] \geq \Omega\Big(\sqrt{Tm + T\log(d/m)}\Big).
    $$
\end{Theorem}
We defer the proof to Appendix~\ref{sec:m-sets-proofs}.

\subsection{The Regret Lower Bound of Hedge}
We introduce the following theorem, showing that the \textsc{Hedge} algorithm is \emph{strictly} sub-optimal on $m$-sets, when $\log d\leq m\leq \sqrt{d}$. The full proof is deferred to Appendix~\ref{sec:m-sets-proofs}.
\begin{Theorem}{thm:m-set-hedge-lb}
Consider integers $m,d,T$ such that $1\leq m\leq d/2\leq \exp(T/3)/2$ and $m\log(d/m)\leq T$.  For any $\eta > 0$, there exists a there exists a sequence of loss vectors $\by_1,\by_2,\ldots,\by_T\in \mathcal{Y}$ over $m$-sets such that the \textsc{Hedge} algorithm with learning rate $\eta$ incurs a expected regret of at least
\[
\Expt[\mathrm{Regret}(T)]=\Expt\left[\sum_{t=1}^T \langle \bx_t, \by_t \rangle - \min_{\bx_* \in \cX}\sum_{t=1}^T \langle \bx_*, \by_t \rangle\right] \geq \Omega\Big(\sqrt{Tm \log(d/m)} \Big).
\]
\end{Theorem}
\begin{proof}[Proof sketch]
    We divide the proof into two cases based on how $\eta$ compares to the base learning rate $\eta_0 := \sqrt{ T^{-1} m  \ln(d/m)} = \Theta \big(\sqrt{T^{-1}\log |\cX|} \big).$
    
    When the learning rate is small, i.e., $\eta \leq \eta_0$, we construct a hard instance by assigning the same fixed loss vector $\by_t$ across all rounds, where $\by_t[i] := \ind[i \in \llbracket m \rrbracket] / m$ for all $i \in \llbracket d \rrbracket$. In this case, we can show that \textsc{Hedge} with a small learning rate incurs a constant regret for any round $t \leq t_0 := \Omega\big(\sqrt{Tm\log(d/m)}\big)$. Thus, we establish a regret lower bound of  $\Omega\big(\sqrt{Tm \log(d/m)} \big)$.
    
    When the learning rate is large, i.e., $\eta > \eta_0$, we construct a hard instance by setting $\by_t[i] = 0$ for all coordinates $i \geq 2$ across all rounds. For the first coordinate, the loss $\by_t[1]$ is assigned in two phases, based on the threshold $t_0 := \ln(d/m)/\eta$ (assuming $t_0 \in \mathbb{N}$ for simplicity). In \textbf{Phase 1} (rounds $t \leq t_0$), we set $\by_t[1] = -1$. In \textbf{Phase 2} (rounds $t > t_0$), the value of $\by_t[1]$ alternates: it is $1$ if $t - t_0$ is odd, and $-1$ if $t - t_0$ is even.

    In this case, we can show that \textsc{Hedge} with a large learning rate incurs a regret of $\Omega\big(\min\{\eta, 1\}\big)$ for every two rounds after $t > t_0$. Thus, we establish a regret lower bound via 
    \[
        \Expt[\mathrm{Regret}(T)] \geq (T - t_0) \cdot \Omega\big(\min\{\eta, 1\}\big) \geq \Omega\Big(\sqrt{Tm \log(d/m)} \Big).
    \]

    In general, we conclude that \textsc{Hedge} has a regret lower bound of $\Omega\big(\sqrt{Tm \log (d/m)} \big)$ on the $m$-sets for any learning rate $\eta>0$.
\end{proof}

\section{The Optimality of Hedge for Online Multitask Learning 
} \label{sec:multitask}

In the previous section, we showed that the \textsc{Hedge} algorithm can be strictly suboptimal for certain combinatorial decision sets, such as $m$-sets. This naturally leads to the question: are there combinatorial settings where \textsc{Hedge} remains optimal? In this section, we answer this in the affirmative by analyzing the Online Multitask Learning problem—a setting that generalizes the classical $K$-experts problem and has been studied in the bandit learning literature as the Multi-Task Bandit problem. In this problem, the learner is presented with $m \geq 1$ separate expert problems, where the $i$-th problem involves $d_i \geq 2$ experts. In each round, the learner selects one expert from each problem and incurs a loss that is the sum of the losses associated with the chosen experts. The goal is to minimize regret with respect to the best expert in each problem in hindsight.

We parameterize the online multitask learning problem as follows: Let $d_{1:i} := \sum_{j=1}^i d_j$ be the total number of experts in the first $i$ expert problems, with $d_{1:0}:=0$. The decision set $\cX$ is of dimension $d = d_{1:m}$ given by 
\[
    \cX = \Bigg\{\bx \in \{0, 1\}^d: \sum_{j=d_{1:i-1}+1}^{d_{1:i}} \bx[j] = 1,\ \forall i \in \llbracket m \rrbracket\Bigg\}.
\]


Recall that the adversary is restricted to choose $\by_t$ such that $\langle \bx, \by_t \rangle \in [-1, 1]$ for all $\bx \in \cX$ in each round $t$ following from Assumption~\ref{as:combintorial-game}. In this case, \textsc{Hedge} has a regret upper bound of $\cO\Big(\sqrt{T \log |\cX|}\Big)$. 
In the following theorem we show that \textsc{Hedge} is optimal for the online multitask Learning problem. We construct a hard instance by using the hard instance for the $i$-th expert problem over $\frac{\log d_i}{\sum_{j=1}^m \log d_j} \cdot T$ rounds. The full proof is deferred to Appendix~\ref{sec:multitask-proofs}.
\begin{Theorem}{thm:multitask}
    Consider any instance of the online multitask learning problem on $\mathcal{X}$ of dimension $d\geq 2$. Let the corresponding loss vector set $\cY$ satisfy Assumption~\ref{as:combintorial-game}. Consider an integer $T\geq 3\log |\mathcal{X}|$. Then for any Algorithm \textsc{Alg}, there exists a sequence of loss vectors $\by_1,\by_2,\ldots,\by_T\in \mathcal{Y}$ such that the algorithm incurs a regret of at least 
    \[
    \Expt[\mathrm{Regret}(T)]=\Expt\left[\sum_{t=1}^T \langle \bx_t, \by_t \rangle - \min_{\bx_* \in \cX}\sum_{t=1}^T \langle \bx_*, \by_t \rangle\right] \geq \Omega\Big(\sqrt{T \log |\cX|}\Big).
    \]
\end{Theorem}

\section{Minimax Optimal Regularizers for Directed Acyclic Graphs} \label{sec:DAG}

We consider the online shortest path problem in the Directed Acyclic Graphs (DAGs). Let $G = (V, E)$ be a DAG with the source vertex $\source \in V$ and the sink $\sink \in V$. We assume that every vertex $v\in V$ is reachable from $\source$ and can reach $\sink$. Denote by $\cX \subseteq \{0, 1\}^E$ the set of all $\source$-$\sink$ paths of the graph $G$, indexed by the edges in $E$. Each vertex $\bx \in \cX$ encodes a $\source$-$\sink$ path in the graph, where $\bx[e] = 1$ indicates that $e \in E$ appears in the path. The convex hull of $\cX$ forms the flow polytope:
\begin{align*}
    \co(\cX) = \Bigg\{ \bx \in  [0, 1]^{E}:\; 
    \sum_{e\in \delta^+(\source)} \!\!\!\!\bx[e] = \!\!\!\!\sum_{e\in \delta^-(\sink)}\!\!\!\! \bx[e] = 1\;\quad\text{and}\quad \sum_{e\in \delta^-(v)}\!\!\!\! \bx[e] = \!\!\!\!\sum_{e\in \delta^+(v)}\!\!\!\! \bx[e], \;  ~~\forall v \in V 
    \Bigg\},
\end{align*}
where $\delta^-(v) = \{(u, v) \in E\}$ and $\delta^+(v) = \{(v, w) \in E\}$ denotes the set of incoming edges and outgoing edges, respectively.
We note that in this case, the loss vector $\by \in \cY \subseteq \RR^E$ is an assignment of the weights such that any $\source$-$\sink$ path has a weight between $-1$ and $1$.



In the following theorem, we show that \textsc{Hedge} is minimax optimal for DAGs. The proof involves a careful construction of a DAG, parameterized by upper bounds on the number of edges and paths.
The full proof is deferred to Appendix~\ref{sec:DAG-proofs}.

\begin{Theorem}{thm:dag-minimax}
    For any integers $d, N, T$ such that $16\leq 2d \leq N \leq 2^d$ and $3\log N\leq T$, and for any algorithm \textsc{Alg}, there exists a DAG $G$ with at most $d$ edges and at most $N$ paths from source $\source$ to sink $\sink$, and a corresponding sequence of loss vectors $\by_1,\by_2,\ldots,\by_T\in \mathcal{Y}$ such that the algorithm incurs a regret lower bound of $$\Expt[\mathrm{Regret}(T)]=\Expt\left[\sum_{t=1}^T \langle \bx_t, \by_t \rangle - \min_{\bx_* \in \cX}\sum_{t=1}^T \langle \bx_*, \by_t \rangle\right] \geq \Omega\big(\sqrt{T \log N}\big).$$
\end{Theorem}

\subsection{OMD with Dilated Entropy Regularizer}\label{sec:omd-dilated}
While our main focus has been on the \textsc{Hedge} algorithm, it is natural to consider its close counterpart, Online Mirror Descent (\textsc{OMD}). With an appropriate distance-generating function, \textsc{OMD} is known to be iterate-equivalent to \textsc{Hedge} in extensive-form games \citep{bai2022efficient, fan2024optimality}. Since DAGs can model such games \citep{maiti2025efficient}, and \textsc{Hedge} is minimax-optimal on DAGs, this motivates a closer examination of OMD on DAGs. In this section, we analyze \textsc{OMD} with the dilated entropy regularizer on DAGs and show that it also achieves minimax-optimal regret. Formally, the dilated entropy $\psi\colon \co(\cX) \to \RR$ is defined by
$$
\psi(\bx) := \sum_{e \in E} \bx[e] \ln \bx[e] - \sum_{v \in V} \bx[v] \ln \bx[v]=\sum_{v \in V \setminus \{\sink\}:\bx[v]>0} \sum_{e \in \delta^+(v)} \bx[e] \ln \frac{\bx[e]}{\bx[v]},
$$
where, by standard convention, $0 \ln 0 := 0$. Here, $\bx[v] := \sum_{e \in \delta^+(v)} \bx[e]$ for all $v \neq \sink$, and $\bx[\sink] := 1$.

We note that the regularizer on the policy $\tilde \bx \in \co(\cX)$ is closely related to the Shannon entropy over the chosen action $\bx \in \cX$. In fact, consider the following procedure for sampling an action $\bx \sim \cD(\tilde \bx) \in \cP(\cX)$: we start from the active vertex $u \leftarrow \source$. At each step, we first set $\bx[u] = 1$, then randomly pick an edge $e = (u, v) \in \delta^+(u)$ with probability $\tilde \bx[e] / \tilde \bx[u]$, set $\bx[e] = 1$, and move to $u \leftarrow v$. Following this Markovian sampling procedure, one can see that the drawn vector $\bx$ is consistent with $\tilde \bx$, i.e., $\Expt[\bx] = \tilde \bx$. Furthermore, we have the following:

\begin{Lemma}{lm:eq-dialted-shannon}
    For any $\tilde \bx \in \co(\cX)$, we have
    \[
        \psi(\tilde\bx) = -H(\bx) := \Expt_{\bx \sim \cD(\tilde \bx)}[\ln \PP(\bx)],
    \]
    where $H(\cdot)$ denotes the Shannon entropy of the random variable. 
\end{Lemma}

The proof of the above lemma is deferred to Appendix \ref{sec:DAG-proofs}. We will now show that running \textsc{OMD} with the dilated regularizer enjoys a regret upper bound of $\cO\big(\sqrt{T \log |\cX| }\big)$, based on the \textsc{OMD} regret bound in Theorem~\ref{thm:omd-regret-ub-general}. From Lemma~\ref{lm:eq-dialted-shannon}, we have that $\psi$ is equivalent to the negative entropy of distribution over $\cX$. This indicates $\cD_{\psi}(\bx_*\mmid\tilde \bx_1) \leq \ln |\cX|$. It remains to pick the norm functions and show the strong convexity. Consider a pair of primal dual norms 
\[
    \|\bz\|_* := \max_{\bx \in \cX} |\langle \bx, \bz \rangle|, \qquad
    \|\bz\| := \max_{\|\by\|_*\leq 1} \langle \by, \bz\rangle.
\]

We note that since $\co(\cX)$ is the flow polytope, its dual, $\{\by : \|\by\|_* \leq 1\}$, is closely related to the set of all cuts of the graph.
The next lemma shows $\psi$ is strongly convex over primal norm $\|\cdot\|$. We defer the proof to Appendix~\ref{sec:DAG-proofs}.
\begin{Lemma}{lm:dilated-entropy-convexity}
    The function $\psi$ is $1/10$-strongly convex with respect to the primal norm $\|\cdot\|$ in $\spanned(\cX)$.
\end{Lemma}
From the standard \textsc{OMD} analysis, running \textsc{OMD} under the dilated entropy $\psi$ achieves a regret upper bound of $\cO\big(\sqrt{T \log |\cX|}\big)$. Hence,  \textsc{OMD} with dilated entropy is minimax optimal for DAGs.

\subsection{Equivalence of Dilated Entropy and \textsc{Hedge}}\label{sec:equivalence-hedge-ftrl}
As shown earlier, both \textsc{OMD} with the dilated entropy regularizer and \textsc{Hedge} over the set of paths in a DAG achieve minimax optimal regret. This naturally raises a fundamental question: are these two approaches equivalent? In this section, we answer this question in the affirmative.

Let $G = (V, E)$ be a directed acyclic graph (DAG) with a designated source vertex $\source$ and sink vertex $\sink$, and let $\mathcal{X}$ denote the set of all paths from $\source$ to $\sink$. If $G$ contains only a single source-to-sink path, the equivalence is immediate. Therefore, we focus on the case where $G$ admits multiple such paths.

We begin by state the following lemma, the proof of which is deferred to Appendix \ref{sec:DAG-proofs}.
\begin{Lemma}{lem:legendre}
    The dilated entropy $\psi$ is differentiable and strictly convex on the relative interior $C := \mathrm{relint}(\co(\cX))$. Moreover, $\lim_{n\rightarrow\infty}\|\nabla_{\bx} \psi(\bx_n)\|_2=\infty$ if $\{\bx_n\}_n$ is sequence of points in $C$ approaching the boundary of $C$.
\end{Lemma}

Now, we present the following theorem, which demonstrate that \textsc{OMD} is, in fact, iterate-equivalent to \textsc{Hedge}. The proof proceeds by formulating the KKT conditions and applying Lemma~\ref{lem:legendre} to establish the equivalence. The proof is deferred to Appendix~\ref{sec:DAG-proofs}.
\begin{Theorem}{thm:hedge-omd-equivalence}
    \textsc{OMD} with dilated entropy is iterate-equivalent to \textsc{Hedge} over the set of paths $\cX$.
\end{Theorem}

\section{Conclusion and Future works}
We investigated the optimality of the classical \textsc{Hedge} algorithm in combinatorial online learning settings. While \textsc{Hedge} achieves a regret of $\mathcal{O}\big(\sqrt{T \log |\mathcal{X}|} \big)$, we established that this rate is nearly optimal—up to a $\sqrt{\log d}$ factor—for any set $\mathcal{X} \subseteq \{0,1\}^d$. We further identified a class of $m$-sets for which \textsc{Hedge} is provably suboptimal and showed that it remains optimal for the multitask learning problem. Finally, we demonstrated that Online Mirror Descent with the dilated entropy regularizer is iterate-equivalent to \textsc{Hedge} on DAGs, providing a computationally efficient regularization framework for a broad family of combinatorial domains.

Our work opens up several interesting directions for future research. One natural question is whether there exists a family of efficiently constructible regularizers that are near-optimal for the combinatorial sets. We conjecture that negative entropy in a suitably lifted space may serve as such a regularizer. In support of this, we refer the reader to Appendix~\ref{sec:another-omd-dag}, where we show that the conjecture holds for DAGs. Another compelling direction is to explore whether there exist near-optimal variants of the Follow-the-Perturbed-Leader algorithm for the combinatorial sets. Since perturbations are often considered more implementation-friendly, exploring near-optimal variants of the Follow-the-Perturbed-Leader algorithm could yield both theoretical and practical advances. Finally, we ask whether there are variants of \textsc{Hedge} that achieve near-optimal regret for arbitrary finite subsets of $\mathbb{R}^d$.

\section*{Acknowledgments}
Ratliff is funded in part by ONR YIP N000142012571, and NSF awards 1844729 and 2312775. Jamieson is funded in part by NSF Award CAREER 2141511 and Microsoft Grant for Customer Experience Innovation. Farina is funded in part by NSF Award CCF-2443068, ONR grant N00014-25-1-2296, and an AI2050 Early Career Fellowship.

\bibliographystyle{plainnat}
\bibliography{refs}

\newpage
\appendix 
\section{Deferred Proofs from Section~\ref{sec:near-optimality-hedge}}\label{sec:near-optimality-hedge-proofs}
\restateTheorem{thm:regret-lb-general}
\begin{proof}
If $|\mathcal{X}| = 1$, the result holds trivially. Therefore, we assume $|\mathcal{X}| \geq 2$. We begin by considering a deterministic algorithm \textsc{Alg} and establish a regret lower bound for it. The result is then extended to randomized algorithms via Yao's lemma.

    Let $k := \max\left\{\left\lfloor \frac{\log |\cX|}{\log (2ed)} \right\rfloor, 1\right\}$. We will show that $|\cX| > \sum_{i=0}^{k-1} \binom{d}{i}$ in general. This is clear when $k = 1$, as we have $|\cX| \geq 2 > 1 = \sum_{i=0}^{k-1} \binom{d}{i}$. In the case where $k \in \llbracket 2, d \rrbracket$, we have that 
    \[
        |\cX| \geq (2ed)^k > k \cdot \bigg(\frac{2ed}{k} \bigg)^k \geq \sum_{i=1}^k \binom{2d}{k} > \sum_{i=1}^k \binom{2d}{i}> \sum_{i=0}^{k-1} \binom{2d}{i}> \sum_{i=0}^{k-1} \binom{d}{i}.
    \]    
    For every vector $\bx \in \cX$, we construct a corresponding indicator set $C_{\bx} := \{i \in \llbracket d \rrbracket : \bx[i] = 1\}$. Denote by  $\cC := \{C_{\bx} : \bx \in \cX\}$ be the collection of such sets.
    According to Lemma~\ref{lem:ss-lem}, there exists a set $\cI \subseteq \llbracket d \rrbracket$ of size $k$ that is shattered by $\cC$. From the definition of shattering and the construction of $\cC$, we have that for any vector $\bz \in \{0,1\}^{\cI}$, there exists some vector $\bx \in \cX$ such that $\bx[\cI] = \bz$.

    We now construct the hard instance. For the simplicity of presentation, we assume that $T$ divides $|\cI|$. We partition the $T$ rounds into $|\mathcal{I}|$ equal-length segments, each assigned to a unique element of $\mathcal{I}$. Let $i_t$ denote the unique element in $\mathcal{I}$ associated with the segment that round $t$ belongs to. The loss vector in round $t$ is then generated as
\[\by_t := \be_{i_t} \cdot \xi_t,\]
where $\xi_t$ is drawn from the Rademacher distribution, i.e., $\mathbb{P}(\xi_t = \pm 1) = 1/2$.

    Now consider the expected loss incurred by the Algorithm \textsc{Alg}. The decisions generated by Algorithm \textsc{Alg} satisfy $\bx_t \perp \by_t \mid \mathcal{F}_t$, i.e., $\bx_t$ is conditionally independent of $\by_t$ given $\mathcal{F}_t$. Therefore,
    \[
        \Expt\bigg[\sum_{t=1}^T \langle \bx_t, \by_t \rangle\bigg] = \sum_{t=1}^T \Expt[\langle \bx_t, \by_t \rangle \mid \cF_t] =  
        \sum_{t=1}^T \Expt[\bx_t[i_t] \cdot \xi_t \mid \cF_t ] = 0.
    \]

    On the other hand, the loss of the optimal action $\bx_* \in \cX$ satifies 
    \[
        \min_{\bx_* \in \cX} \sum_{t=1}^{T} \langle \bx_*, \by_t \rangle = \min_{\bx_* \in \cX} \sum_{t=1}^T \bx_*[i_t] \cdot \xi_t = \min_{\bx_* \in \cX} \sum_{i\in \cI} \bx_*[i] \sum_{t:i_t = i} \xi_t.
    \]
    According to the selection of $\cI$, for any vector $\bz \in \RR^d$, there exists some vector $\bx \in \cX$ such that $\bx[i] = 1$ if and only if $\bz[i] < 0$ for every $i \in \cI$. This implies that we can always choose some vector $\bx \in \cX$ to minimize $\langle \bx[\cI], \bz[\cI] \rangle$ simultaneously across all coordinates in $\cI$. Thus,
    \[
        \min_{\bx_* \in \cX} \sum_{i\in \cI} \bx_*[i] \sum_{t:i_t = i} \xi_t = \sum_{i\in \cI} \min\bigg\{\sum_{t: i_t = i} \xi_t,0\bigg\}.
    \]

    Taking the expectation over the randomness of $\xi_t$, this implies
    \begin{align*}
        \Expt\bigg[\min_{\bx_* \in \cX} \sum_{t=1}^{T} \langle \bx_*, \by_t \rangle \bigg] &= \sum_{i\in\cI} \Expt \left[\min\bigg\{\sum_{t: i_t = i} \xi_t,0\bigg\}\right]= -\frac{1}{2}\sum_{i\in\cI} \Expt \bigg|\sum_{t: i_t = i} \xi_t \bigg|\\
        &\leq -\sum_{i \in \cI} \sqrt{T/(8|\cI|)}  = - \sqrt{T|\cI|/8}.
    \end{align*}
    where the second inequality follows from Khintchine Inequality (Lemma~\ref{thm:Khintchine}) in which the expected absolute sum of $n$ independent Rademacher variables is at least $\sqrt{n/2}$, and the size of set $\{t: i_t = i\}$ is exactly $T/|\cI|$.

    In general, we can conclude that 
    \[
        \Expt[\mathrm{Regret}(T)] = \Expt\bigg[ \sum_{t=1}^T \langle \bx_t, \by_t \rangle - \min_{\bx_* \in \cX} \sum_{t=1}^{T} \langle \bx_*, \by_t \rangle \bigg] \geq \sqrt{\frac{T|\cI|}{8}} = \Omega\Bigg(\max\left\{\sqrt{\frac{T \log |\cX|}{\log d}},\sqrt{T}\right\}\Bigg)
    \]
    where the last equality follows from $|\cI| = k = \Omega(\max\{\log|\cX|/ \log d,1\})$.

By Yao's lemma, for any randomized algorithm \textsc{Alg}, there exists a sequence of loss vectors $\by_1, \by_2, \ldots, \by_T \in \mathcal{Y}$ such that the algorithm incurs a regret of at least $$\Expt[\mathrm{Regret}(T)] \geq \Omega\Bigg(\max\left\{\sqrt{\frac{T \log |\cX|}{\log d}},\sqrt{T}\right\}\Bigg).$$

\end{proof}

\section{Deferred Proofs from Section~\ref{sec:m-sets}} \label{sec:m-sets-proofs}

\restateProposition{prop:msets-reward-set}

\begin{proof}
It is sufficient to show that $\langle \by, \bz \rangle \leq 3 \|\bz\|_\infty + (1/m) \|\bz\|_1$ for any two vectors $\by, \bz \in \RR^d$ with $\|\by\|_* \leq 1$. Let $S_1:=\{i\in[d]:\by[i]\geq 0\}$ and $S_2:=\{i\in[d]:\by[i]< 0\}$. Let us assume that $|S_1|\leq d/2$. 

Let us reindex the indices in $S_2$ as $\{i_1,i_2,\ldots,i_{|S_2|}\}$ such that $\by[i_1]\leq \by[i_2]\leq\ldots \leq \by[i_{|S_2|}]$. Observe that $\sum_{j=1}^m \by[i_j]\geq -1$.  Hence, $\by[i_j]\geq -1/m$ for all $m\leq j\leq |S_2|$.

Let us reindex the indices in $S_1$ as $\{s_1,s_2,\ldots,s_{|S_1|}\}$ such that $\by[s_1]\geq \by[s_2]\geq\ldots \geq \by[s_{|S_2|}]$. Let us first consider the case when $m\leq|S_1|\leq d/2$. Observe that $\sum_{j=1}^m \by[s_j]\leq 1$.  Hence, $\by[s_j]\leq 1/m$ for all $m\leq j\leq |S_2|$. In this case, we have the following:
\begin{align*}
    \langle \bz,\by\rangle &\leq \sum_{j=1}^m |\bz[i_j]|\cdot|\by[i_j]|+\sum_{j=m+1}^{|S_2|} |\bz[i_j]|\cdot|\by[i_j]|+\sum_{j=1}^m |\bz[s_j]|\cdot|\by[s_j]|+\sum_{j=m+1}^{|S_1|} |\bz[s_j]|\cdot|\by[s_j]|\\
    &\leq \|\bz\|_\infty\cdot \sum_{j=1}^m |\by[i_j]|+\frac{1}{m}\cdot\sum_{j=m+1}^{|S_2|} |\bz[i_j]|+\|\bz\|_\infty\cdot\sum_{j=1}^m |\by[s_j]|+\frac{1}{m}\cdot\sum_{j=m+1}^{|S_1|} |\bz[s_j]|\\
    &\leq 2\|\bz\|_\infty+\frac{1}{m}\|\bz\|_1
\end{align*}

Next, let us consider the case when $|S_1|<m$. If $|S_1|\neq 0$, then consider the set $\mathcal{I}=S_1\cup\{i_1,i_2,\ldots,i_{m-|S_1|}\}$. Now observe that $\sum_{i\in \mathcal{I}}\by[i]\leq 1$. Hence, we have that $\sum_{j=1}^{|S_1|} \by[s_j]\leq 1-\sum_{j=1}^{m-|S_1|}\by[i_j]\leq 2$ as $\sum_{j=1}^{m-|S_1|}\by[i_j]\geq\sum_{j=1}^m \by[i_j]\geq -1$.  In this case, we have the following:
\begin{align*}
    \langle \bz,\by\rangle &\leq \ind\{|S_1|\neq 0\}\cdot\sum_{j=1}^{|S_1|} |\bz[s_j]|\cdot|\by[i_j]|+\sum_{j=1}^m |\bz[i_j]|\cdot|\by[i_j]|+\sum_{j=m+1}^{|S_2|} |\bz[i_j]|\cdot|\by[i_j]|\\
    &\leq \ind\{|S_1|\neq 0\}\cdot\|\bz\|_\infty\cdot \sum_{j=1}^{|S_1|} |\by[s_j]|+\|\bz\|_\infty\cdot\sum_{j=1}^m |\by[i_j]|+\frac{1}{m}\cdot\sum_{j=m+1}^{|S_2|} |\bz[i_j]|\\
    &\leq 3\|\bz\|_\infty+\frac{1}{m}  \|\bz\|_1
\end{align*}

If $|S_1|> d/2$, using analogous calculations we can again show that $$\langle \bz,\by\rangle\leq 3 \|\bz\|_\infty+\frac{1}{m} \|\bz\|_1.$$
\end{proof}

\restateLemma{lm:msets-convexity}
\begin{proof}
    Take $\bz \in \RR^d$.
    Plugging in $(a + b)^2 \leq 2a^2 + 2b^2$ for any $a, b \in \RR$ into Proposition~\ref{prop:msets-reward-set} implies 
    \begin{align} \label{lm:msets-convexity-eq0}
        \|\bz\|^2 \leq 18 \|\bz\|_\infty^2 + \frac{2}{m^2} \|\bz\|_1^2.
    \end{align}
    According to the definition of the regularizer $\varphi$, the Hessian matrix of the function is a diagonal matrix with 
    $\nabla^2\varphi(\bx)[i, i] = 2 + 1 / (m \cdot \bx[i]).$
    Considering the local norm of the vector $\bz$ with respect to $\varphi(\bx)$, we have
    \begin{align} \label{lm:msets-convexity-eq1}
        \|\bz\|_{\nabla^2 \varphi(\bx)}^2 &= \bz^{\top} \nabla^2 \varphi(\bx) \bz 
        = \sum_{i=1}^d \bz[i]^2 \cdot \bigg(2 + \frac{1}{m \bx[i]} \bigg)
        = 2 \underbrace{\sum_{i=1}^d \bz[i]^2}_{\cI_1} + \frac{1}{m} \underbrace{\sum_{i=1}^d \frac{\bz[i]^2}{\bx[i]}}_{\cI_2}.
    \end{align}
    We note that
    \begin{align} \label{lm:msets-convexity-eq2}
        \cI_1 = \sum_{i=1}^d \bz[i]^2 \geq \max_{i=1}^d \bz[i]^2 = \|\bz\|_\infty^2.
    \end{align}
    Furthermore, by the definition of the $m$-Set, we have $\sum_{i=1}^d \bx[i] = m$ for $\bx \in \cX$. Therefore, we can write
    \begin{align} \label{lm:msets-convexity-eq3}
        \cI_2 = \bigg( \sum_{i=1}^d \frac{\bz[i]^2}{\bx[i]} \bigg) \cdot \frac{1}{m} \bigg( \sum_{i=1}^d \bx[i] \bigg) 
        \geq \frac{1}{m} \Bigg( \sum_{i=1}^d \sqrt{\frac{\bz[i]^2}{\bx[i]}} \cdot \sqrt{\bx[i]} \Bigg)^2 
        = \frac{1}{m} \|\bz\|_1^2,
    \end{align}
    where the inequality follows from the Cauchy–Schwarz inequality.

    Plugging \eqref{lm:msets-convexity-eq2} and \eqref{lm:msets-convexity-eq3} into \eqref{lm:msets-convexity-eq1}, we get
    \begin{align*}
        \|\bz\|_{\nabla^2 \varphi(\bx)}^2 \geq 2 \|\bz\|_\infty^2 + \frac{1}{m^2} \|\bz\|_1^2 \geq \frac{1}{9}\|\bz\|^2,
    \end{align*}
    where the last inequality follows from \eqref{lm:msets-convexity-eq0}. This concludes that $\varphi$ is $1/9$-strongly convex with respect to the primal norm $\|\cdot\|$.
\end{proof}

\restateLemma{lm:msets-range}
\begin{proof}
    From the first-order optimality of $\tilde \bx_1 := \argmin_{ \bx \in \co(\cX)} \varphi( \bx)$, it satisfies that for any vector $\bx_* \in \cX$, $\langle \nabla \varphi(\tilde \bx_1), \bx_* - \tilde \bx_1 \rangle = 0$. Therefore, 
    \begin{align} \label{lm:msets-range-eq1}
        \cD_\varphi(\bx_*\mmid\tilde \bx_1) \leq \max_{\bx \in \co(\cX)} \varphi(\bx) - \min_{\bx \in \co(\cX)} \varphi(\bx).
    \end{align}
    
    Consider $\bx\in\co(\cX)$. On the one hand, we have
    \begin{align} \label{lm:msets-range-eq2}
        \varphi(\bx) = \sum_{i=1}^d \bigg(\bx[i]^2 + \frac{1}{m} \bx[i] \ln \bx[i]\bigg) 
        \leq \sum_{i=1}^d \bx[i] + 0 = m,
    \end{align}
    where the first term is bounded by $\bx[i]^2 \leq \bx[i]$ for $\bx[i] \in [0, 1]$, and the second term satisfies $\bx[i] \ln \bx[i] \leq 0$.

    On the other hand, define $\bp = \bx / m$. Then, $\bp$ lies in the probability simplex by the definition of $\bx \in \co(\cX)$.
    \begin{align} \label{lm:msets-range-eq3}
        \varphi(\bx) &= \sum_{i=1}^d \bigg(\bx[i]^2 + \frac{1}{m} \bx[i] \ln \bx[i]\bigg) \notag \\
        &\geq 0 + \sum_{i=1}^d \bp[i] \ln \bp[i] + \sum_{i=1}^d \bp[i] \ln m \notag \\
        &= \sum_{i=1}^d \bp[i] \ln \bp[i] + \ln m \notag \\
        &\geq  -\ln (d / m),
    \end{align}
    where the first inequality uses $\bx[i]^2 \geq 0$, and the second inequality follows from the entropy upper bound over the probability simplex.

    Finally, plugging \eqref{lm:msets-range-eq2} and \eqref{lm:msets-range-eq3} into \eqref{lm:msets-range-eq1} yields
    \begin{align*}
        \cD_\varphi(\bx_*\mmid\tilde \bx_1) \leq m + \ln (d / m).
    \end{align*}
\end{proof}

\restateTheorem{thm:m-set-all-algo-lb}
\begin{proof}
    For every vector $\bx \in \cX$, we construct a corresponding indicator set $C_{\bx} := \{i \in \llbracket d \rrbracket : \bx[i] = 1\}$. Denote by $\cC := \{C_{\bx} : \bx \in \cX\}$ the collection of these sets. We have $|\cC| = |\cX|$. Notice that the set $\cI = \{1, \ldots, m\}$, of size $m$, is shattered by $\cC$. Hence, by the same argument as in the proof of Theorem~\ref{thm:regret-lb-general}, for any randomized algorithm \textsc{Alg} there exists a sequence of loss vectors $\by_1,\by_2,\ldots,\by_T\in \mathcal{Y}$ such that the algorithm incurs a regret of at least $\Omega(\sqrt{Tm})$ .

    To prove that $\mathrm{Regret}(T) \geq \Omega \big(\sqrt{T\log(d/m)} \big)$, we construct a hard instance as follows. For simplicity of presentation, we assume that $d$ divides $m$. Let $K:=d/m$. Let $\mathcal{D}_K$ denote the zero-mean distribution over $\{-1, +1\}^K$ from Lemma \ref{lem:k-experts}.
    In each round $t$, the environment first draws a vector $\bz_t \sim \mathcal{D}_K$ and assigns
    \[
        \by_t := \Bigg[\underbrace{\frac{\bz_t[1]}{m}, \frac{\bz_t[1]}{m}, \dots, \frac{\bz_t[1]}{m}}_{m}, \underbrace{\frac{\bz_t[2]}{m}, \frac{\bz_t[2]}{m}, \dots, \frac{\bz_t[2]}{m}}_{m}, \dots, \underbrace{\frac{\bz_t[K]}{m}, \frac{\bz_t[K]}{m}, \dots, \frac{\bz_t[K]}{m}}_{m} \Bigg]^\top.
    \]

    Next for all $i \in\llbracket K\rrbracket$, let $\bx^{(i)}$ be a vector in $\mathcal{X}$ that is defined as
    \[
       \bx^{(i)} := \Bigg[0,0,\ldots,0, \underbrace{1, 1, \dots, 1}_{\text{$i$-th block of $m$ coordinates}}, 0,0,\ldots,0 \Bigg]^\top,
    \]
    that is, $\bx^{(i)}[j] = 1$ if $(i - 1)m+1 \leq j \leq i \cdot m$ and $\bx^{(i)}[j] = 0$ otherwise.

    We begin by considering a deterministic algorithm \textsc{Alg} and establish a regret lower bound for it. The result is then extended to randomized algorithms via Yao's lemma.
    Let $\bx_t$ be the vector chosen by the Algorithm \textsc{Alg} in the round $t$.  Observe that the expected loss of \textsc{Alg} is zero as $\Expt[\langle \bx_t,\by_t\rangle]= \Expt[\Expt[\langle \bx_t,\by_t\rangle|\mathcal{F}_t]]= \Expt[\langle \bx_t, \Expt[\by_t|\mathcal{F}_t] \rangle] = 0.$ On the other hand, we have 
    $$\Expt\bigg[\min_{\bx\in \mathcal{X}}\sum_{t=1}^T\langle \bx, \by_t\rangle\bigg]\leq \Expt\bigg[\min_{i\in \llbracket K\rrbracket}\sum_{t=1}^T\langle \bx^{(i)}, \by_t\rangle\bigg]=\Expt\bigg[\min_{i\in \llbracket K\rrbracket}\sum_{t=1}^T\langle \be_i, \bz_t\rangle\bigg].$$ 
    Hence, the regret of \textsc{Alg} is lower bounded as 
    $$\Expt[\mathrm{Regret}(T)]=\Expt\left[\sum_{t=1}^T\langle \bx_t,\by_t\rangle-\min_{\bx\in \mathcal{X}}\sum_{t=1}^T\langle \bx, \by_t\rangle\right]\geq-\Expt\left[\min_{i\in \llbracket K\rrbracket}\sum_{t=1}^T\langle \be_i, \bz_t\rangle\right]\geq \Omega(\sqrt{T\log(d/m)}),$$ 
    where the last inequality follows from Lemma \ref{lem:k-experts}. 
    
    By Yao's lemma, for any randomized algorithm \textsc{Alg}, there exists a sequence of loss vectors $\by_1, \by_2, \ldots, \by_T \in \mathcal{Y}$ such that the algorithm incurs a regret of at least $\Omega\left(\sqrt{T \log(d/m)}\right)$.
    
    
    Combining both the lower bounds, we conclude the desired regret lower bound:
    \[
        \Expt[\mathrm{Regret}(T)] \geq \Omega\left(\max\left\{\sqrt{Tm},\sqrt{T\log(d/m)}\right\}\right)\geq \Omega\left(\sqrt{Tm + T\log(d/m)}\right).
    \]



\end{proof}

\restateTheorem{thm:m-set-hedge-lb}
\begin{proof}
Recall that the \textsc{Hedge} algorithm with learning rate $\eta > 0$ selects an action $\bx \in \cX$ in round $t$ with probability proportional to:
\[
\PP_t(\bx) \propto w_t(\bx) := \exp\bigg(-\eta \sum_{\tau = 1}^{t-1} \langle \bx, \by_\tau \rangle \bigg).
\]

If $m\leq 20$, the regret lower bound of $\Omega \big(\sqrt{mT\ln (d/m)} \big)=\Omega \big(\sqrt{T\log d} \big)$ for $\textsc{Hedge}$ directly follows from Theorem \ref{thm:m-set-all-algo-lb}. Hence, for the rest of the proof we assume that $m\geq 20$.

We divide the proof into two cases based on how $\eta$ compares to the base learning rate 
$$\eta_0 := \sqrt{\frac{m\ln(d/m)}{T}} = \Theta \bigg(\sqrt{\frac{\log |\cX|}{T}} \bigg).$$

\textbf{Regime where the learning rate is small, $\eta \leq \eta_0$:}

When the learning rate is small, we construct the hard instance by assigning a fixed loss vector $\by_t$ across the rounds according to 
\begin{align*}
    \by_t[i] := (1 / m) \cdot \ind[i \in \llbracket m \rrbracket ].
\end{align*}
Let the set of bad actions that place small Hamming weight on the first $m$ coordinates be defined as
\[
\cS := \bigg\{\bx \in \cX : \big|\big\{ i \in \llbracket m \rrbracket : \bx_t[i] \neq 1 \big\} \big| \geq \bigg\lfloor \frac{m}{20} \bigg\rfloor \bigg\}.
\]
From the calculation in Lemma~\ref{lm:binom-S-X}, we have that the subset $\cS$ is relatively large with respect to the whole decision set $\cX$:
\[
    \frac{|\cX \setminus \cS|}{|\cX|} \leq \exp \bigg(-\frac{m}{20}\cdot \ln\bigg(\frac{d}{m}\bigg) \bigg) =: w_0.
\]
For any round $t \leq t_0 := \sqrt{(m/400) \cdot T\ln(d/m)}$, the weight of any action $\bx \in \cS$ is at least 
\[
    w_t(\bx) = \exp\bigg(-\eta \sum_{\tau=1}^{t-1} \langle \bx, \by_\tau \rangle \bigg) \geq \exp(-\eta_0 \cdot t_0) = \exp \bigg(-\frac{m}{20}\cdot \ln\bigg(\frac{d}{m}\bigg) \bigg) = w_0.
\]
Furthermore, for any $\bx \in \cX \setminus \cS$, it is clear that the weight satisfies $w_t(\bx) \leq 1$. 
Now consider the probability that some bad action $\bx_t \in \cS$ is chosen in round $t \leq t_0$. We have:
\[
    \PP(\bx_t \in \cS) =  \frac{\sum_{\bx \in \cS} w_t(\bx)}{\sum_{\bx \in \cX} w_t(\bx)} \geq \bigg(1 + \frac{\sum_{\bx \in \cX\setminus \cS} w_t(\bx)}{\sum_{\bx \in \cS} w_t(\bx)}\bigg)^{-1} \geq \bigg(1 + \frac{|\cX \setminus \cS|}{|\cS|} \cdot \frac{\max_{\bx \in \cX}w_t(\bx)}{\min_{\bx \in \cS} w_t(\bx)} \bigg)^{-1} \geq \frac{1}{2}.
\]

Observe that in round $t$, if \textsc{Hedge} chooses any action $\bx \in \cS$, then it incurs a regret of at least $(1/m)\cdot \lfloor m/20\rfloor\geq 1/40$, where the last inequality follows from Lemma \ref{lem:floor}. 
Hence, \textsc{Hedge} incurs a constant regret in each round $t \leq t_0$. Note that as loss vector is fixed, \textsc{Hedge} can not achieve negative regret in any round. This indicates that \textsc{Hedge} incurs a total regret of at least 
\[
    \Expt[\mathrm{Regret}(T)] \geq \min\{t_0, T\} \cdot \PP(\bx_t \in \cS) \cdot \frac{1}{40} \geq \Omega\Big(\sqrt{Tm \log(d/m)} \Big),
\]
for $m\log(d/m)\leq T$, when the learning rate $\eta \leq \eta_0$ is small.

\textbf{Regime where the learning rate is large, $\eta > \eta_0$:}

When the learning rate is large, we construct the hard instance as follows. We set $\by_t[i] = 0$ for all coordinates $i \geq 2$ across all rounds $t \in \llbracket T \rrbracket$. For the first coordinate, we define $\by_t[1]$ in two phases, determined by $t_0 := \ln(d/m) / \eta \leq \sqrt{T\ln(d/m)/m}$.

\begin{itemize}
    \item \textbf{Phase 1:} For the first $t \leq \lfloor t_0 \rfloor$ rounds, we assign $\by_t[1] = -1$. If $t_0 \notin \NN$, then in round $t = \lceil t_0 \rceil$, we assign $\by_t[1] = -(t_0 - \lfloor t_0 \rfloor)$. In this way, the cumulative loss over the first $\lceil t_0 \rceil$ rounds is exactly $-t_0$ on the first coordinate.
    
    \item \textbf{Phase 2:} For all remaining rounds $t > \lceil t_0 \rceil$, we assign $\by_t[1]$ in an alternating manner as follows:
    \begin{align*}
        \by_t[1] := \begin{cases}
            +1 & \text{if $t - \lceil t_0 \rceil$ is odd}, \\
            -1 & \text{if $t - \lceil t_0 \rceil$ is even}.
        \end{cases}
    \end{align*}
\end{itemize}

Let $\cS_0 := \{\bx \in \cX : \bx[1] = 0\}$ and $\cS_1 := \{\bx \in \cX : \bx[1] = 1\}$ denote a partition of the action set based on whether mass is placed on the first coordinate. Note that
\[
|\cS_0| = \bigg(1 - \frac{m}{d}\bigg) \cdot \binom{d}{m}, \qquad 
|\cS_1| = \frac{m}{d} \cdot \binom{d}{m}.
\]
According to the loss structure of the hard instance, every action in $\cS_1$ incurs a same loss of $\by_t[1]$ in each round $t$, while every action in $\cS_0$ incurs zero loss. According to the \textsc{Hedge} update rule, one can see running \textsc{Hedge} on $\cX$ is essentially equivalent to running the algorithm on two aggregate actions $\bx_\zero$ and $\bx_\one$, where $\bx_\zero$ represents playing a uniformly random action from $\cS_0$ and $\bx_\one$ represents playing uniformly from $\cS_1$. The initial weights are given by $w_1(\bx_\zero) = |\cS_0|$ and $w_1(\bx_\one) = |\cS_1|$. In each round $t$, action $\bx_\one$ suffers a loss of $\by_t[1]$, and action $\bx_\zero$ suffers a loss of $0$.

We analyze \textsc{Hedge} on actions $\bx_\zero$ and $\bx_\one$.
For simplicity, we assume that $T - \lceil t_0 \rceil$ is an even number. Consider a round $t_1 := \lceil t_0 \rceil + 2b$, for some integer $b \geq 0$. 
Let $w^\zero$ and $w^\one$ denote the weights of actions $\bx_0$ and $\bx_1$ in round $t_1 + 1$, respectively.
Then, \textsc{Hedge} chooses action $\bx_\one$ with probability $\frac{w^\one}{w^\zero + w^\one}$ in round $t_1 + 1$, and with probability $\frac{w^\one \cdot \exp(-\eta)}{w^\zero + w^\one \cdot \exp(-\eta)}$ in round $t_1 + 2$. Therefore, the total expected loss incurred in these two rounds is
\begin{align*}
    \sum_{t = t_1+1}^{t_1+2} \Expt[\langle \bx_{t}, \by_{t} \rangle] 
    = (+1) \cdot \frac{w^\one}{w^\zero + w^\one} 
    + (-1) \cdot \frac{w^\one \cdot \exp(-\eta)}{w^\zero + w^\one \cdot \exp(-\eta)} 
    = \frac{w^\zero w^\one(1-e^{-\eta})}{(w^\zero + w^\one)(w^\zero + w^\one \cdot e^{-\eta})}.
\end{align*}
Since the cumulative loss in the first $t_1$ rounds of action $\bx_0$ is $0$, and that of action $\bx_1$ is $-t_0$, we have
\begin{align*}
w^\zero & 
= |\cS_0| \cdot \exp(-\eta \cdot 0) 
= |\cS_0| = \bigg(1 - \frac{m}{d}\bigg) \cdot \binom{d}{m}, \\
w^\one &
= |\cS_1| \cdot \exp(-\eta \cdot (-t_0)) 
= |\cS_1| \cdot \exp(\ln(d/m)) 
= \binom{d}{m}.
\end{align*}
From $1 \leq m \leq d/2$, it follows that $1 \leq w^\one/w^\zero \leq 2$. Plugging this into the expression gives:
\[
\sum_{t = t_1+1}^{t_1+2} \Expt[\langle \bx_{t}, \by_{t} \rangle] 
\geq \frac{w^\zero w^\one (1-e^{-\eta})}{(w^\zero + w^\one)^2} 
\geq \Omega \big(\min\{\eta, 1\} \big),
\]
where the last inequality follows from $1 \leq w^\one / w^\zero \leq 2$ and also $(1 - e^{-x}) \geq (1 - e^{-1}) \cdot \min\{x, 1\}$ for all $x > 0$.

Now observe that any action $\bx \in \cS_1$ is the best action in hindsight over $T$ rounds, incurring a cumulative loss of $-t_0$. The \textsc{Hedge} algorithm suffers at least $-t_0$ expected loss in the first phase and incurs an expected loss of at least $\Omega(\min\{\eta, 1\})$ every two rounds in the second phase. Therefore, the total regret incurred by \textsc{Hedge} is at least
\[
\Expt[\mathrm{Regret}(T)] 
\geq \frac{T - \lceil t_0 \rceil}{2} \cdot \Omega\big(\min\{\eta, 1\}\big) 
\geq \Omega\Big(\sqrt{T m \log(d/m)} \Big),
\]
for $m\log(d/m)\leq T$,  when the learning rate $\eta > \eta_0$ is large.
\end{proof}

\section{Deferred Proof from Section~\ref{sec:multitask}}\label{sec:multitask-proofs}
\restateTheorem{thm:multitask}
\begin{proof}
    We begin by considering a deterministic algorithm \textsc{Alg} and establish a regret lower bound for it. The result is then extended to randomized algorithms via Yao's lemma.
    
    Let us assume that there are $m$ expert problems and the $i$-th problem has $d_i\geq 2$ experts. Recall that $d_{1:i}=\sum_{j=1}^id_j$ and $d_{1:0}=0$. For all $i\in \llbracket m\rrbracket$, let $T_i:=\frac{\log d_i}{\sum_{j=1}^m\log d_j}\cdot T$. For simplicity of presentation, let us assume that $T_i$ is a positive integer for all $i\in \llbracket m\rrbracket$. Let  $T_{1:i}=\sum_{j=1}^iT_j$ and $d_{1:0}=0$. For any $K\geq 2$, let $\mathcal{D}_K$ be the zero-mean distribution over $\{-1,+1\}^K$ from Lemma \ref{lem:k-experts}.
    
    We begin by dividing the $T$ rounds into $m$ phases, where $i$-th phase lasts for $T_i$ rounds. In a round $t$ in the $i$-th phase, we choose $\bz_t\sim\mathcal{D}_{d_i}$ and define the loss vector $\by_t$ as follows:
    \[\by_t[s] = 
\begin{cases}
  \bz_t[j]  & \text{if } s=d_{1:i-1}+j, \\
  0  & \text{otherwise.}
\end{cases}\]

Let $\bx_t$ be the vector chosen by the Algorithm \textsc{Alg} in the round $t$.  Observe that the expected loss of \textsc{Alg} is zero as $\Expt[\langle \bx_t,\by_t\rangle]=\Expt[\Expt[\langle \bx_t,\by_t\rangle|\mathcal{F}_t]]=0$. Given the structure of the loss vector $\by_t$, the problem reduces to solving $m$ separate expert problems, each in its own phase. Consequently, the regret incurred by \textsc{Alg} can be decomposed as follows:
\begin{align*}
    \Expt\left[\mathrm{Regret}(T)\right] &=\sum_{i=1}^m-\Expt\left[\min_{j\in \llbracket d_i\rrbracket}\sum_{t=T_{1:i-1}+1}^{T_{1:i}}\langle \be_j,\bz_t\rangle\right]\\ 
    &\geq c_0\cdot \sum_{i=1}^m\sqrt{T_i\log d_i}\tag{due to Lemma \ref{lem:k-experts}}\\
    &= c_0\cdot\frac{\left(\sum_{i=1}^m\log d_i\right)\cdot \sqrt{T}}{\sqrt{\sum_{i=1}^m\log d_i}}\\
    &=c_0\cdot\sqrt{T\log |\mathcal{X}|}\tag{as $\prod_{i=1}^md_i=|\mathcal{X}|$},
\end{align*}
where $c_0$ is the universal constant from Lemma \ref{lem:k-experts}.

By Yao's lemma, for any randomized algorithm \textsc{Alg}, there exists a sequence of loss vectors $\by_1, \by_2, \ldots, \by_T \in \mathcal{Y}$ such that the algorithm incurs a regret of at least $\Omega\big(\sqrt{T \log |\cX|}\big)$.
\end{proof}

\section{Deferred Proof from Section~\ref{sec:DAG}}
\label{sec:DAG-proofs}
\restateTheorem{thm:dag-minimax}
\begin{proof}
Let us assume that $d\geq 8$ and $2d\leq N\leq 2^d$. Let $d_0:=8\lfloor d/8\rfloor$. Due to Lemma \ref{lem:floor}, we have $d/2\leq d_0\leq d$. Let $N_0:=\min\{N,2^{d_0/4}\}$. Note that $\log(N_0)=\Theta(\log N)$. Let $m$ be the largest integer such that $m \leq d_0 / 8$ and $(\frac{d_0}{2m})^m\leq N_0$. Now we claim that $m\log (\lfloor\frac{d_0}{2m}\rfloor)\geq \Omega(\log N)$.

If $m=d_0/8$, then we have $m\log (\lfloor\frac{d_0}{2m}\rfloor)=(d_0/8)\log(4)\geq \Omega(\log N)$. Let us consider the case when $m<d_0/8$.
Due to Lemma \ref{lem:increasing-func}, $(a/x)^x$ is increasing in the range $[1,a/3]$ for any $a>3$. Hence, we have $(\frac{d_0}{2m})^m\leq N_0<(\frac{d_0}{2(m+1)})^{m+1}$. Next we have the following:
\begin{equation*}
    \frac{(m+1)\log(\frac{d_0}{2(m+1)})}{m\log(\frac{d_0}{2m})}\leq \frac{(m+1)\log(\frac{d_0}{2m})}{m\log(\frac{d_0}{2m})}\leq \frac{m+1}{m}\leq 2
\end{equation*}
Due to the above inequality, Lemma \ref{lem:floor} and $d_0/m\geq8$, we have the following 

\[m\log\left(\left\lfloor\frac{d_0}{2m}\right\rfloor\right)\geq m\log\left(\frac{d_0}{4m}\right)\geq \frac{m}{2}\log\left(\frac{d_0}{2m}\right)\geq \frac{m+1}{4}\log\left(\frac{d_0}{2(m+1)}\right) \geq \Omega(\log N).\]

Consider a DAG $G=(V,E)$ where the set of vertices $V$ and the set of edges $E$ are defined as follows:
\[
V := \{v_i\}_{i=0}^{m} \cup \left\{v_i^j \mid i \in \llbracket m \rrbracket,\, j \in \left\llbracket \left\lfloor\frac{d_0}{2m}\right\rfloor \right\rrbracket\right\},\;
E := \left\{(v_{i-1}, v_i^j),\, (v_i^j, v_i) \mid i \in \llbracket m \rrbracket,\, j \in \left\llbracket \left\lfloor\frac{d_0}{2m}\right\rfloor \right\rrbracket\right\}
\]

Observe that $|E|=2m\cdot \left\llbracket \left\lfloor\frac{d_0}{2m}\right\rfloor \right\rrbracket\leq d$. Also observe that the number of paths from source to sink in $G$ is $\left\lfloor\frac{d_0}{2m}\right\rfloor^m\leq N_0\leq N$. Hence, $G$ is in the family of DAGs $\mathcal{F}_{d,N}$. Now we show that any algorithm incurs a regret of $\Omega(\sqrt{T\log N})$ on the DAG $G$.

For any $K \geq 2$, let $\mathcal{D}_K$ denote the zero-mean distribution over $\{-1, +1\}^K$ from Lemma \ref{lem:k-experts}. We begin by dividing the $T$ rounds into $m+1$ phases: the first $m$ phases each last for $\lfloor T/m \rfloor$ rounds, and the $(m+1)$-th phase lasts for the remaining $T - m\lfloor T/m \rfloor$ rounds. In any round $t$ of the $i$-th phase with $i \leq m$, we sample ${\bz}_t \sim \mathcal{D}_{\lfloor d_0 / (2m) \rfloor}$, and define the loss vector $\by_t$ as follows:
    \[\by_t[e] = 
\begin{cases}
  \bz_t[j]& \text{if }e=(v_{i-1},v_{i}^j), \\
  0  & \text{otherwise.}
\end{cases}\]
In a round $t$ in the $m+1$-th phase, we choose $\by_t=\mathbf{0}$.

 We begin by considering a deterministic algorithm \textsc{Alg} and establish a regret lower bound for it. The result is then extended to randomized algorithms via Yao's lemma.
Let $\bx_t$ be the vector chosen by the Algorithm \textsc{Alg} in the round $t$.  Observe that the expected loss of \textsc{Alg} is zero as $\Expt[\langle \bx_t,\by_t\rangle]=\Expt[\Expt[\langle \bx_t,\by_t\rangle|\mathcal{F}_t]]=0$. Given the structure of the loss vector $\by_t$, the problem reduces to solving $m$ separate expert problems, one in each phase $i\leq m$. Consequently, the regret incurred by \textsc{Alg} can be decomposed as follows:
\begin{align*}
    \Expt\left[\mathrm{Regret}(T)\right] &=\sum_{i=1}^m-\Expt\left[\min_{j\in \llbracket \lfloor d_0 / (2m) \rfloor\rrbracket}\sum_{t=(i-1)\cdot \lfloor T/m \rfloor +1}^{i\cdot  \lfloor T/m \rfloor}\langle \be_j,\bz_t\rangle\right]\\ 
    &\geq (c_0/2)\cdot \sum_{i=1}^m\sqrt{(T/m)\log (\lfloor d_0 / (2m) \rfloor)}\tag{due to Lemma \ref{lem:k-experts} and Lemma \ref{lem:floor}}\\
    &= (c_0/2)\cdot \sqrt{mT\log (\lfloor d_0 / (2m) \rfloor)}\\
    &=\Omega\left(\sqrt{T\log N}\right)\tag{as $m\log (\lfloor d_0 / (2m) \rfloor)\geq \Omega(\log N)$},
\end{align*}
where $c_0$ is the universal constant from Lemma \ref{lem:k-experts}.

By Yao's lemma, for any randomized algorithm \textsc{Alg}, there exists a sequence of loss vectors $\by_1, \by_2, \ldots, \by_T \in \mathcal{Y}$ such that the algorithm incurs a regret of at least $\Omega\big(\sqrt{T \log N}\big)$.
 \end{proof}
\restateLemma{lm:eq-dialted-shannon}
\begin{proof}
    From the procedure of Markovian sampling in Section \ref{sec:omd-dilated}, we have for any $\bx \in \cX$
    \begin{align*}
        \PP(\bx) = \prod_{(u, v) \in E: \bx[(u, v)] = 1} \frac{\tilde \bx[(u, v)]}{\tilde \bx[u]} = \frac{\prod_{e \in E: \bx[e] = 1} \tilde \bx[e]}{\prod_{v \in V: \bx[v] = 1} \tilde \bx[v]}.
    \end{align*}
    Therefore, we have
    \begin{align*}
        -H(\bx) &= \Expt_{\bx \sim \cD(\tilde \bx)}[\ln \PP(\bx)] \\
        &= \Expt_{\bx \sim \cD(\tilde \bx)} \Bigg[\sum_{e \in E:\bx[e] = 1} \ln \tilde\bx[e] - \sum_{v \in V:\bx[v] = 1} \ln \tilde\bx[v] \Bigg] \\
        &= \Expt_{\bx \sim \cD(\tilde \bx)} \Bigg[\sum_{e \in E}\ind\{\bx[e] = 1\}\cdot \ln \tilde\bx[e] - \sum_{v \in V}\ind\{\bx[v] = 1\}\cdot \ln \tilde\bx[v] \Bigg] \\
        &= \sum_{e \in E} \tilde \bx[e] \ln \tilde \bx[e] - \sum_{v \in V} \tilde \bx[v] \ln \tilde \bx[v] = \psi(\tilde \bx),
    \end{align*}
    where the fourth equality follows from $\Expt[\bx] = \tilde \bx$, which concludes the proof for the equality.
\end{proof}

\restateLemma{lm:dilated-entropy-convexity}

\begin{proof}
It is sufficient to show that $10\|\bz\|^2_{\nabla^2\psi(\bx)} \geq \langle \by, \bz \rangle^2$ for any vector $\bx \in \mathrm{relint}(\co(\cX))$, $\|\by\|_* \leq 1$, and $\bz \in \spanned(\cX)$.

Consider a weight assignment $\by$ of the edges. For every vertex $v \in V$, we define $d_{\min}(v) \in \RR$ as the weight of the shortest path from $v$ to $\sink$, and $d_{\max}(v) \in \RR$ as the weight of the longest path from $v$ to $\sink$. We have $d_{\min}(\sink) = d_{\max}(\sink) = 0$, as well as $-1 \leq d_{\min}(\source) \leq d_{\max}(\source) \leq 1$.

We construct a vector $\by'$ by assigning
\[
    \by'[(u, v)] = \by[(u, v)] + d_{\min}(v) - d_{\min}(u)
\]
for every edge $(u, v) \in E$. By the triangle inequality, all values $\by'[(u, v)]$ are nonnegative.
Since $\bz \in \spanned(\cX)$, we have
\begin{align} \label{eq:lm:dilated-entropy-convexity-0}
    \langle \by', \bz \rangle &= \sum_{(u, v) \in E} \bz[(u, v)] \cdot \big( \by[(u, v)] + d_{\min}(v) - d_{\min}(u) \big) \notag \\
    &= \sum_{e \in E} \bz[e] \cdot \by[e] + \sum_{v \in V\setminus\{\source,\sink\}} d_{\min}(v) \cdot \bigg(\sum_{e \in \delta^-(v)} \bz[e] - \sum_{e \in \delta^+(v)} \bz[e] \bigg)- d_{\min}(\source) \cdot \sum_{e \in \delta^+(\source)} \bz[e]\notag \\
    &= \langle \by, \bz \rangle - d_{\min}(\source) \cdot \bz[\source],
\end{align}
where $\bz[\source]:=\sum_{e \in \delta^+(\source)} \bz[e]$ and the last equality follows from the flow constraints implied by $\bz \in \spanned(\cX)$. Note that this equality also holds for $\bx \in \mathrm{relint}(\co(\cX))$. Combined with the facts that $\langle \by, \bx \rangle \leq 1$, $d_{\min}(\source) \geq -1$, and $\bx[\source] = 1$, we have
\begin{align} \label{eq:lm:dilated-entropy-convexity-by'}
    0 \leq \langle \by', \bx \rangle \leq 2. 
\end{align}
Now, we define a function $d_{\Delta}(v) : V \to \RR$ by assigning  $d_{\Delta}(v) := d_{\max}(v) - d_{\min}(v) \in [0, 2]$ for each vertex $v \in V$. In this case,
\begin{align} \label{eq:lm:dilated-entropy-convexity-1}
    \cI_V := \sum_{v \in V} 2\bx[v] \ln \bx[v] &= \sum_{v \in V} d_{\Delta}(v) \cdot \bx[v] \ln \bx[v] + \sum_{v \in V} \big(2 - d_{\Delta}(v)\big) \cdot \bx[v] \ln \bx[v] \notag \\
    &= \sum_{v \in V\setminus\{\source\}} d_{\Delta}(v) \sum_{e \in \delta^-(v)} \bx[e] \ln \bx[v] + \sum_{v \in V\setminus\{\sink\}} \big(2 - d_{\Delta}(v)\big) \sum_{e \in \delta^+(v)} \bx[e] \ln \bx[v]
\end{align}

Furthermore, for every edge $(u, v) \in E$, we define
\[
    \by''[(u, v)] := d_{\Delta}(v) + \by'[(u, v)] - d_{\Delta}(u) = d_{\max}(v) + \by[(u, v)] - d_{\max}(u).
\]
By the triangle inequality, we have $\by''[e] \leq 0$ for all $e \in E$. Similarly, from the properties of $\by'$ and $d_\Delta(\cdot)$, we also have $\by''[e] \geq -2$ for any edge $e \in E$. Furthermore, we can write
\begin{align} \label{eq:lm:dilated-entropy-convexity-2}
    \cI_E' &:= \sum_{e \in E} \big(2 + \by''[e]\big) \cdot \bx[e] \ln \bx[e] \notag \\
    &= \sum_{(u, v) \in E} \big(d_{\Delta}(v) + \by'[(u, v)] + 2 - d_{\Delta}(u)\big) \cdot \bx[(u, v)] \ln \bx[(u, v)] \notag \\
    &= \sum_{v \in V\setminus\{\source\}} d_{\Delta}(v) \sum_{e \in \delta^-(v)} \bx[e] \ln \bx[e] + \sum_{e \in E} \by'[e] \cdot \bx[e] \ln \bx[e] + \sum_{v \in V\setminus\{\sink\}} \big(2 - d_{\Delta}(v)\big) \sum_{e \in \delta^+(v)} \bx[e] \ln \bx[e]
\end{align}

As a result, we have
\begin{align*}
    \psi(\bx) &= \frac{1}{2} \underbrace{\sum_{e \in E} \big(2 + \by''[e]\big) \bx[e] \ln \bx[e]}_{\cI_E'} - \frac{1}{2} \sum_{e \in E} \by''[e] \cdot \bx[e] \ln \bx[e] - \frac{1}{2} \underbrace{\sum_{v \in V} 2\bx[v] \ln \bx[v]}_{\cI_V} \\
    &= \frac{1}{2} \underbrace{\sum_{v \in V\setminus\{\source\}} d_{\Delta}(v) \sum_{e \in \delta^-(v)} \bx[e] \ln \bigg(\frac{\bx[e]}{\bx[v]}\bigg)}_{\cI^+} + \frac{1}{2} \underbrace{\sum_{v \in V\setminus\{\sink\}} \big(2 - d_{\Delta}(v)\big) \sum_{e \in \delta^+(v)} \bx[e] \ln \bigg(\frac{\bx[e]}{\bx[v]}\bigg)}_{\cI^-} \\
    &\qquad + \frac{1}{2} \underbrace{\sum_{e \in E} \by'[e] \cdot \bx[e] \ln \bx[e]}_{\cI'} + \frac{1}{2} \underbrace{\sum_{e \in E} -\by''[e] \cdot \bx[e] \ln \bx[e]}_{\cI''}
\end{align*}
where the second equality follows from \eqref{eq:lm:dilated-entropy-convexity-1} and \eqref{eq:lm:dilated-entropy-convexity-2}.
Since $d_{\Delta}(v) \in [0, 2]$ and $\by''[e] \leq 0$, we have that $\cI^+$, $\cI^-$, and $\cI''$ are all sums of convex functions. Therefore, their sum is also convex. Thus, for any vector $\bz \in \spanned(\cX)$,
\[
    \bz^\top \nabla^2_{\bx\bx} \big(\cI^+ + \cI^- + \cI'' \big) \bz \geq 0.
\]
Furthermore, from the second-order derivative $(x \ln x)'' = 1 / x$, we have
\begin{align*}
    \bz^\top \nabla^2_{\bx\bx} \cI' \bz = \sum_{e \in E} \bz[e]^2 \cdot \frac{\by'[e]}{\bx[e]} &\geq \bigg(\sum_{e \in E} \bz[e]^2 \cdot \frac{\by'[e]}{\bx[e]} \bigg) \cdot \frac{1}{2}\bigg(\sum_{e \in E} \bx[e] \cdot \by'[e] \bigg) \\
    &\geq \frac{1}{2}\Bigg(\sum_{e \in E} \bz[e] \cdot \sqrt{\frac{\by'[e]}{\bx[e]}} \cdot \sqrt{\bx[e] \cdot \by'[e]} \Bigg)^2 = \frac{1}{2} \langle \by', \bz \rangle^2.
\end{align*}
The first inequality follows from $\langle \bx, \by' \rangle \leq 2$ in \eqref{eq:lm:dilated-entropy-convexity-by'}, and the second inequality follows from the Cauchy–Schwarz inequality.
Plugging in the two inequalities above gives
\begin{align} \label{eq:lm:dilated-entropy-convexity-final1}
    \|\bz\|^2_{\nabla^2\psi(\bx)} := \bz^\top \nabla^2 \psi(\bx) \bz \geq \frac{1}{4} \langle \by', \bz \rangle^2.
\end{align}

Furthermore, we can also decompose the regularizer as
\begin{align*}
    \psi(\bx) &= \underbrace{\sum_{v \in V \setminus \{\source,\sink\}} \sum_{e \in \delta^+(v)} \bx[e] \ln \bigg( \frac{\bx[e]}{\bx[v]} \bigg)}_{\cI^E} + \underbrace{\sum_{e \in \delta^+(\source)} \bx[e] \ln \bx[e]}_{\cI^V}.
\end{align*}

Since $\cI^E$ is the sum of convex functions, for any $\bz \in \spanned(\cX)$, we have $\bz^\top \nabla^2_{\bx\bx} \cI^E \bz \geq 0.$
From the second order derivative $(x \ln x)'' = 1 / x$, we have that 
\begin{align*}
    \bz^\top \nabla^2_{\bx\bx} \cI^V \bz = \sum_{e \in \delta^+(\source)} \bz[e]^2 \cdot \frac{1}{\bx[e]} &\geq\bigg(\sum_{e \in \delta^+(\source)} \bz[e]^2 \cdot \frac{1}{\bx[e]} \bigg) \cdot \bigg(\sum_{e \in \delta^+(\source)} \bx[e]\bigg) \\
    &\geq \Bigg(\sum_{e \in \delta^+(\source)} \bz[e] \cdot \sqrt{\frac{1}{\bx[e]}} \cdot \sqrt{\bx[e]} \Bigg)^2 \\
    &\geq \big(d_{\min}(\source)\cdot \bz[\source] \big)^2
\end{align*}
where the second inequality follows from the Cauchy-Schwarz inequality, and the last inequality follows from $\big|d_{\min}(\source)\big| \leq 1$. This indicates 
\begin{align} \label{eq:lm:dilated-entropy-convexity-final2}
    \|\bz\|^2_{\nabla^2\psi(\bx)}:= \bz^\top \nabla^2 \psi(\bx) \bz \geq \big(d_{\min}(\source)\cdot \bz[\source] \big)^2
\end{align}
Combining \eqref{eq:lm:dilated-entropy-convexity-final1} and \eqref{eq:lm:dilated-entropy-convexity-final2} concludes that  
\[
    10\|\bz\|^2_{\nabla^2\psi(\bx)} = 10\bz^\top \nabla^2 \psi(\bx) \bz \geq 2\big|\langle \by', \bz \rangle \big|^2 + 2\big(d_{\min}(\source)\cdot \bz[\source] \big)^2 \geq \big(\langle \by', \bz \rangle + d_{\min}(\source)\cdot \bz[\source] \big)^2 = \langle \by, \bz \rangle^2
\]
where the second inequality follows from $2a^2 + 2b^2 \geq (a + b) ^ 2$ for any $a, b \in \RR$, and the last equality follows from \eqref{eq:lm:dilated-entropy-convexity-0}. This completes the proof.
    
\end{proof}

\restateLemma{lem:legendre}
\begin{proof}
Let $C:= \mathrm{relint}(\co(\cX))$ and let $\partial C:=\co(\cX)\setminus C$. As every coordinate is positive inside the relative interior $C$, $\psi$ is differentiable on $C$. Due to Lemma \ref{lm:dilated-entropy-convexity}, it follows that $\psi$ is also strictly convex on $C$. 

Next consider a sequence $\{\bx_n\}_n$ with $\bx_n\in C$ for all $n$ and $\lim_{n\rightarrow\infty}\bx_n=\bx^*\in\partial C$. Consider a vertex $u \in V$ such that $\bx^*[u]>0$ and there exists an edge $e=(u,v)$ such that $\bx^*[e]=0$. Note that such a vertex $u$ exists otherwise $\bx^*\in C$. Now observe that $$\lim_{n\rightarrow\infty} \big |\nabla_{\bx} \psi(\bx_n)[e] \big|= \lim_{n\rightarrow\infty}\big |\ln \bx_n[e]-\ln \bx_n[u] \big |=\infty.$$

    Hence, we have $\lim_{n\rightarrow\infty}\|\nabla_{\bx} \psi(\bx_n)\|_2=\infty$.
\end{proof}

\restateTheorem{thm:hedge-omd-equivalence}
\begin{proof}
Due to Lemma \ref{lem:legendre}, the solution of the following \textsc{OMD} equation lies in the relative interior of $\operatorname{co}(\mathcal{X})$:
\begin{equation*}
    \tilde \bx_{t} \leftarrow \argmin_{ \bx \in \co(\cX)} \Big\{\eta \langle \by_{t - 1},  \bx \rangle + \cD_\psi(\bx\mmid\tilde \bx_{t - 1}) \Big\}
\end{equation*}
where $\psi(\bx)$ is the dilated entropy and $\eta=\sqrt{\log |\mathcal{X}| / T}$. That is, for all edges $e$, we have $\tilde\bx_{t}[e]>0$.

For any vertex $v$, denote by $\delta^{-}(v)$ and $\delta^+(v)$ the sets of incoming and outgoing edges from $v$, respectively. Recall that $\co(\cX)$ is described by the following linear constraints:
\begin{align*}
    g_v(\bx)&:=\sum_{e\in \delta^{-}(v)}\bx[e]-\sum_{e\in \delta^{+}(v)}\bx[e]=0\quad \forall v\in V\setminus\{\source,\sink\}\\
    g_{\source}(\bx)&:=1-\sum_{e\in \delta^{+}(\source)}\bx[e]=0\\
    g_{\sink}(\bx)&:=\sum_{e\in \delta^{-}(\sink)}\bx[e]-1=0\\
    h_e(\bx)&:=-\bx[e]\leq 0
\end{align*}

Let the Lagrangian be defined as:
\begin{equation*}
    \cL_t(\bx,\blambda,\bmu):=\eta \langle \by_{t - 1},  \bx \rangle + \cD_\psi(\bx\mmid\tilde \bx_{t - 1})+\sum_{v\in V}\blambda[v]g_v(\bx)+\sum_{e\in E}\bmu[e]h_e(\bx)
\end{equation*}

By the KKT conditions, there exists a tuple $(\tilde\bx_t,\blambda_t,\bmu_t)$ such that
$$\frac{\partial \cL_t(\bx,\blambda,\bmu)}{\partial \bx[e]}\bigg|_{\bx = \tilde \bx_t,\; \blambda = \blambda_t,\; \bmu = \bmu_t}=0.$$
Due to complementary slackness and the fact that $\tilde \bx_t[e]>0$ for all $e\in E$, it follows that $\bmu_t[e]=0$ for all $e\in E$. Hence, we obtain:
\begin{align*}
    \ln \frac{\tilde \bx_t[e]}{\tilde \bx_t[u]}=\ln \frac{\tilde \bx_{t-1}[e]}{\tilde \bx_{t-1}[u]}-\left(\eta\by_{t-1}[e]-\blambda_t[u]+\blambda_t[v]\right)
\end{align*}
where $e=(u,v)$.

Let $\PP_t(p)$ denote the probability of choosing path $p$ in round $t$. Observe that
\begin{align*}
    \PP_t(p) = \prod\limits_{e=(u,v)\in p}\frac{\tilde \bx_t[e]}{\tilde \bx_t[u]} &\propto \prod\limits_{e=(u,v)\in p}\frac{\tilde \bx_{t-1}[e]}{\tilde \bx_{t-1}[u]}\exp\bigg(-\eta\sum\limits_{e=(u,v)\in p}\by_{t-1}[e]\bigg)\\
    &=\PP_{t-1}(p)\exp\bigg(-\eta\sum\limits_{e=(u,v)\in p}\by_{t-1}[e]\bigg).
\end{align*}
Thus, \textsc{OMD} with dilated entropy is equivalent to \textsc{Hedge} over paths.
\end{proof}

\section{Another Near-optimal \textsc{OMD} for DAGs}\label{sec:another-omd-dag}
Let $G = (V, E)$ be a DAG, and let $\cX \subseteq \{0, 1\}^E$ represent all paths from the source vertex $\source$ to the sink vertex $\sink$. According to \citet{maiti2025efficient}, there always exists a strategically equivalent DAG in which the longest $\source$-$\sink$ path contains at most $\cO(\log |\cX|)$ edges. Therefore, without loss of generality, we assume that the longest path from $\source$ to $\sink$ contains $\cO(\log |\cX|)$ edges.

Let $\by_t \in \RR^E$ be the loss vector observed in round $t$. We construct a modified non-negative loss vector $\by_t' \in \RR^E_{\geq 0}$ such that:
\begin{itemize}
    \item There exists a constant $\alpha_t$ such that $\langle \bx, \by_t' \rangle = \langle \bx, \by_t \rangle + \alpha_t$ for any $\bx \in \cX$.
    \item For any $\bx \in \cX$, it satisfies $\sum_{e \in E} \bx[e] \by_t'[e]^2 \leq 4$.
\end{itemize}

Given the weight vector $\by_t$, we define, for each vertex $v \in V$, the quantities $d_{\min}(v) \in \mathbb{R}$ and $d_{\max}(v) \in \mathbb{R}$ as the weights of the shortest and longest paths from $\source$ to $v$, respectively. Clearly, $d_{\min}(\source) = d_{\max}(\source) = 0$, and $-1 \leq d_{\min}(\sink) \leq d_{\max}(\sink) \leq 1$.

We define the modified loss vector $\by_t' \in \RR^E$ by assigning:
\[
    \by_t'[(u, v)] = \by_t[(u, v)] + d_{\min}(u) - d_{\min}(v).
\]
Then, for any $\bx \in \cX$, we have $\langle \bx, \by_t' \rangle = \langle \bx, \by_t \rangle - d_{\min}(\sink)$. 

We now prove that $\sum_{e \in E} \bx[e] \by_t'[e]^2 \leq 4$ for any $\bx \in \cX$. Define a function $d_{\Delta}: V \to \RR$ by $d_{\Delta}(v) := d_{\max}(v) - d_{\min}(v) \in [0, 2]$ for each $v \in V$. Note that $\by_t'[(u, v)] \geq 0$ by the triangle inequality. Furthermore, using the inequality $d_{\max}(v) \geq d_{\max}(u) + \by_t[(u, v)]$, we get:
\[
    \by_t'[(u, v)] \leq d_{\max}(v) - d_{\max}(u) + d_{\min}(u) - d_{\min}(v) = d_{\Delta}(v) - d_{\Delta}(u).
\]

Now, consider any path $\bx \in \cX$ with vertices $v_0 = \source, v_1, \dots, v_k = \sink$. We have:
\begin{align*}
    \sum_{e \in E} \bx[e] \by_t'[e]^2 = \sum_{i=1}^k \by_t'[(v_{i-1}, v_i)]^2
    &\leq \sum_{i=1}^k \big(d_{\Delta}(v_i) - d_{\Delta}(v_{i-1})\big)^2 \\
    &\leq \left(\sum_{i=1}^k \big(d_{\Delta}(v_i) - d_{\Delta}(v_{i-1})\big)\right)^2 = d_{\Delta}(\sink)^2 \leq 4.
\end{align*}

Now consider the \textsc{OMD} update rule:
\begin{equation*}
    \tilde \bx_{t} \leftarrow \argmin_{ \tilde\bx \in \co(\cX)} \Big\{\eta \langle \by_{t - 1}',  \tilde\bx \rangle + \cD_\phi(\tilde\bx\mmid\tilde \bx_{t - 1}) \Big\}
\end{equation*}
where the regularizer is defined as
\[
\phi(\tilde\bx) := \sum_{e \in E} \big(\tilde\bx[e] \ln \tilde\bx[e] - \tilde\bx[e]\big).
\]

Let $\|\cdot\|_{\nabla^2 \phi(\tilde\bx_t)}$ denote the local norm induced by the Hessian of $\phi$ at $\tilde\bx_t$, and the corresponding dual norm is given by $\|\cdot\|_{\nabla^{-2} \phi(\tilde\bx_t)}$. Using standard regret analysis, we have:
\begin{align*}
    \Expt[\mathrm{Regret}(T)] &\leq \frac{1}{\eta} \bigg(\max_{\tilde\bx \in \co(\cX)} \phi(\tilde\bx) - \min_{\tilde\bx \in \co(\cX)} \phi(\tilde\bx)\bigg) + \eta \sum_{t=1}^{T} \|\by_t'\|_{\nabla^{-2}\phi(\tilde\bx_t)}^2 \\
    &\leq \frac{1}{\eta}\max_{\tilde\bx\in \operatorname{co}(\mathcal{X})} \sum_{e\in E} \left(\tilde\bx[e]\ln (1/\tilde\bx[e])+\tilde\bx[e]\right) + \eta \sum_{t=1}^{T} \|\by_t'\|_{\nabla^{-2} \phi(\tilde\bx_t)}^2 \\
    &\leq \frac{c}{\eta}\cdot \log|\mathcal{X}|\cdot \log d+ \eta\sum_{t=1}^T \sum_{e\in E}\tilde\bx_t[e]\by_t'[e]^2\\
    &\leq \frac{c}{\eta}\cdot \log|\mathcal{X}|\cdot \log d+4\eta \cdot T
\end{align*}
where $c$ is an absolute constant. The third inequality uses the fact that the number of edges in any path is upper bounded by $\mathcal{O}(\log |\mathcal{X}|)$, along with Jensen's inequality. The fourth inequality follows from the construction of $\by_t'$. By setting $\eta = \sqrt{ T^{-1}\log |\cX| \cdot \log d }$, the regret is upper bounded by
$
\cO\big(\sqrt{T \log |\cX| \cdot \log d}\big).
$

Having established an alternative OMD algorithm for DAGs, we now outline a general template for designing near-optimal OMD algorithms using the negative entropy regularizer for other combinatorial sets $\mathcal{X} \subseteq \{0,1\}^d$.
\begin{itemize}
\item \textbf{Lift the action set}: Map $\mathcal{X}$ to an equivalent higher-dimensional set $\tilde{\mathcal{X}} \subseteq \{0,1\}^D$, where $D = \text{poly}(d)$, such that the diameter of $\tilde{\mathcal{X}}$ is small with respect to the negative entropy regularizer.
\item \textbf{Change the loss vectors}: Construct an equivalent non-negative loss vector such that the sum of squared dual norms is small.
\end{itemize}

\section{Auxiliary Lemmas}\label{sec:auxiliary}
\begin{lemma}[Yao's lemma]
Let $\mathcal{A}$ be a set of deterministic algorithms, and $\mathcal{Y}$ a set of inputs. Let $c(A, y)$ denote the cost incurred by a deterministic algorithm $A \in \mathcal{A}$ on input $y \in \mathcal{Y}$, such as its regret.  Let $\mathcal{D}$ be a distribution over inputs and $\mathcal{R}$ be a class of distributions over deterministic algorithms. Then for any randomized algorithm $R \in \mathcal{R}$, we have:
\[
\min_{A \in \mathcal{A}} \mathbb{E}_{y \sim \mathcal{D}}[c(A, y)] 
\;\leq\;
\max_{y \in \mathcal{Y}} \mathbb{E}_{R}[c(R, y)].
\]
\end{lemma}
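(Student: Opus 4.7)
The plan is to prove Yao's minimax inequality through the classical two-step chain: replace the min over deterministic algorithms by an average over the randomization distribution $R$, swap the order of expectations via Fubini, and then replace the expectation over $\mathcal{D}$ by a pointwise maximum over $\mathcal{Y}$. Fix any randomized algorithm $R \in \mathcal{R}$ and any input distribution $\mathcal{D}$.

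The first step exploits the elementary fact that the minimum of a function over a set is at most any weighted average of its values. Concretely, viewing $A \mapsto \mathbb{E}_{y \sim \mathcal{D}}[c(A,y)]$ as a function on $\mathcal{A}$, and noting that $R$ is a probability distribution on $\mathcal{A}$, I would write
\[
\min_{A \in \mathcal{A}} \mathbb{E}_{y \sim \mathcal{D}}[c(A, y)] \;\leq\; \mathbb{E}_{A \sim R}\!\left[\mathbb{E}_{y \sim \mathcal{D}}[c(A, y)]\right].
\]
The second step applies Fubini's theorem (the random variables $A$ and $y$ are independent, and $c$ is bounded in the settings of interest, so the hypotheses are satisfied) to exchange the order of expectation, giving
\[
\mathbb{E}_{A \sim R}\!\left[\mathbb{E}_{y \sim \mathcal{D}}[c(A, y)]\right] \;=\; \mathbb{E}_{y \sim \mathcal{D}}\!\left[\mathbb{E}_{A \sim R}[c(A, y)]\right] \;=\; \mathbb{E}_{y \sim \mathcal{D}}\!\left[\mathbb{E}_{R}[c(R, y)]\right],
\]
where the last equality is just the definition that the expected cost of the randomized algorithm $R$ on a fixed input $y$ is the average of $c(A,y)$ over $A \sim R$.

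The third step replaces the expectation over $y \sim \mathcal{D}$ by the pointwise supremum: since $\mathbb{E}_{R}[c(R,y)] \leq \max_{y' \in \mathcal{Y}} \mathbb{E}_{R}[c(R, y')]$ for every $y$ in the support of $\mathcal{D}$, taking expectations preserves the inequality and yields
\[
\mathbb{E}_{y \sim \mathcal{D}}\!\left[\mathbb{E}_{R}[c(R, y)]\right] \;\leq\; \max_{y \in \mathcal{Y}} \mathbb{E}_{R}[c(R, y)].
\]
Chaining the three displays gives the claimed bound. There is essentially no obstacle here; the only minor subtlety is ensuring that the Fubini exchange is legitimate, which in all the applications in this paper is immediate because $\mathcal{Y}$ and $\mathcal{A}$ can be taken to be standard Borel and $c$ is bounded (regret over $T$ rounds with per-round losses in $[-1,1]$). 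A reader could alternatively view the inequality as the weak-duality direction of the minimax theorem applied to the game with payoff $c$, but the direct averaging argument above is the shortest self-contained route and is the one I would present.
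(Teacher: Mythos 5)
Your proof is correct: the three-step chain (minimum bounded by the average over $A \sim R$, exchange of the two expectations, then bounding the expectation over $y \sim \mathcal{D}$ by the pointwise maximum) is the standard argument for the easy direction of Yao's principle, and the paper itself states this lemma as a classical fact without including a proof, so there is nothing to diverge from. The only cosmetic caveat is that for infinite $\mathcal{A}$ or $\mathcal{Y}$ one should read $\min$ and $\max$ as $\inf$ and $\sup$ (the inequality still holds verbatim), and your remark that Fubini is harmless here because the regret cost is bounded is exactly the right justification for the applications in this paper.
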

\textbf{Remark:} By Yao's lemma, to prove a lower bound on the expected cost of any randomized algorithm, it suffices to exhibit a distribution over inputs under which every deterministic algorithm incurs high expected cost.

\begin{lemma}[Khintchine inequality (restated), \citet{haagerup1981best}] \label{thm:Khintchine}
    Let $\{\xi_t\}_{t=1}^T$ be $T$ independent Rademacher random variables with 
    $\PP(\xi_t = \pm 1) = 1/2$. Let $x_1, \dots, x_T \in \CC$. Then, 
    \[
         \sqrt{\frac{1}{2}\sum_{t=1}^T |x_t|^2 } \leq \Expt \bigg| \sum_{t=1}^T \xi_t x_t \bigg| \leq  \sqrt{\sum_{t=1}^T |x_t|^2}.
    \]
\end{lemma}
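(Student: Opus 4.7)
The plan is to handle the two inequalities separately. Write $S := \sum_{t=1}^T \xi_t x_t$ for the Rademacher sum. The identity
\[
\Expt|S|^2 \;=\; \sum_{s,t} \Expt[\xi_s\xi_t]\, x_s \overline{x_t} \;=\; \sum_{t=1}^T |x_t|^2,
\]
which follows from independence and $\Expt[\xi_s\xi_t]=\delta_{st}$, will serve as the common anchor for both bounds.

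For the upper bound, I would apply Jensen's inequality to the concave function $r\mapsto\sqrt{r}$, giving
$\Expt|S| \leq \sqrt{\Expt|S|^2} = \sqrt{\sum_t |x_t|^2}$,
which is exactly the claim.

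For the lower bound, the plan is a standard moment interpolation. By H\"older's inequality with conjugate exponents $p=3/2$ and $q=3$ applied to the factorization $|S|^2 = |S|^{2/3}\cdot |S|^{4/3}$,
\[
\sum_{t=1}^T |x_t|^2 \;=\; \Expt|S|^2 \;\leq\; (\Expt|S|)^{2/3}\, (\Expt|S|^4)^{1/3}.
\]
I would then bound $\Expt|S|^4$ by expanding $|S|^4 = \sum_{s,t,u,v}\xi_s\xi_t\xi_u\xi_v\, x_s\overline{x_t}\,x_u\overline{x_v}$ and observing that $\Expt[\xi_s\xi_t\xi_u\xi_v]$ vanishes unless the multiset of indices has all even multiplicities. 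Enumerating the surviving cases (all four indices equal; two matched pairs, in each of three possible pairings) and using $\big|\sum_t x_t^2\big|^2 \leq (\sum_t |x_t|^2)^2$ together with $-2\sum_t|x_t|^4\leq 0$, one obtains
\[
\Expt|S|^4 \;\leq\; 3\Big(\sum_{t=1}^T |x_t|^2\Big)^{2}.
\]
Substituting back yields $\Expt|S| \;\geq\; 3^{-1/2}\big(\sum_t |x_t|^2\big)^{1/2}$.

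The main obstacle is sharpening the constant from $1/\sqrt{3}$ to the optimal $1/\sqrt{2}$ stated in the lemma. The constant $1/\sqrt{2}$ is Szarek's sharp Khintchine constant at $p=1$ (attained by $T=2$, $x_1=x_2=1$), whose proof requires a more delicate combinatorial/convexity argument; \citet{haagerup1981best}, cited alongside the lemma statement, established the sharp constants for all $p$ in both the real and complex settings. For the application inside the proof of Theorem~\ref{thm:regret-lb-general} any universal constant suffices, since it is immediately absorbed into the $\Omega(\cdot)$ notation; one therefore either uses the weaker $1/\sqrt{3}$ constant derived above, or invokes \citet{haagerup1981best} directly for the sharp $1/\sqrt{2}$.
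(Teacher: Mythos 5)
The paper does not actually prove this lemma: it is stated as a restatement of the classical Khintchine inequality with the sharp constants credited to \citet{haagerup1981best}, and no argument is given. Measured against that, your proposal does more than the paper. Your upper bound (Jensen applied to $\Expt|S|\le\sqrt{\Expt|S|^2}$ together with the orthogonality identity $\Expt|S|^2=\sum_t|x_t|^2$) is correct and complete. Your lower bound via H\"older interpolation between $L^1$ and $L^4$, with the fourth-moment expansion $\Expt|S|^4\le 3\bigl(\sum_t|x_t|^2\bigr)^2$ (the pairing count and the estimate $\bigl|\sum_t x_t^2\bigr|^2\le\bigl(\sum_t|x_t|^2\bigr)^2$ are right, including in the complex case), is also correct, but it only yields the constant $3^{-1/2}$, so it does not establish the lemma exactly as stated with $2^{-1/2}$; as you note, the sharp $p=1$ constant is Szarek's/Haagerup's and requires a genuinely harder argument, which is precisely why the paper cites \citet{haagerup1981best} rather than proving it. Your observation that the sharp constant is immaterial to the paper is accurate: in the proof of Theorem~\ref{thm:regret-lb-general} the Khintchine lower bound enters as $\Expt\bigl|\sum_{t:i_t=i}\xi_t\bigr|\ge\sqrt{T/(2|\cI|)}$ and the resulting $\sqrt{T|\cI|/8}$ is immediately absorbed into $\Omega(\cdot)$ (likewise in Lemma~\ref{lem:k-experts}), so replacing $1/\sqrt2$ by $1/\sqrt3$ changes nothing downstream. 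In short: your argument is a valid, self-contained proof of a slightly weaker statement, and for the statement as written you correctly fall back on the same citation the paper relies on; the only thing to flag is that the interpolation route cannot by itself recover the stated constant.
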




\begin{lemma}[\citet{orabona2015optimal}]\label{lem:radamacher}
    There exists universal constants $c_1 \geq 8$ and $c_2 > 0$ such that for any $d \in [c_1, \exp(T/3)]$,
    we have 
    \[
        \Expt\bigg[\max_{i\in \llbracket d \rrbracket} \sum_{t=1}^T \xi_{t,i} \bigg] \geq c_2 \cdot \sqrt{T \log d}.
    \]
    where $\{\xi_{t,i}\}_{t=1, i=1}^{T, d}$ are i.i.d. Rademacher random variables with 
    $\PP(\xi_{t,i} = \pm 1) = 1/2$.
\end{lemma}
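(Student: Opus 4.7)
The plan is to convert the expected maximum into a tail probability statement. Writing $S_i := \sum_{t=1}^T \xi_{t,i}$ and $M := \max_{i\in\llbracket d\rrbracket} S_i$, I first observe that for any threshold $u > 0$,
\[
M \;\geq\; u \cdot \ind\{M\geq u\} \;+\; S_1 \cdot \ind\{M<u\}.
\]
Taking expectations, using $\Expt[S_1] = 0$, and applying Cauchy--Schwarz together with $\Expt[S_1^2] = T$ yields
\[
\Expt[M] \;\geq\; u\,p \;-\; \sqrt{T\,p}, \qquad p := \PP(M \geq u).
\]
Hence the proof reduces to exhibiting a threshold $u = \Theta(\sqrt{T\log d})$ for which $p \geq 1/2$: the residual $\sqrt{Tp}\leq \sqrt{T}$ is then dominated by $u/2$ as soon as $\log d$ exceeds an absolute constant, which we fold into $c_1$.

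By independence of the columns, $p = 1 - (1-q)^d$ with $q := \PP(S_1 \geq u)$, so it suffices to show $q \geq (\ln 2)/d$. This is the technical heart of the argument: a sharp lower bound on the right tail of a single Rademacher sum. I would derive it via a direct Stirling estimate. Writing $S_1 = 2B - T$ with $B\sim\mathrm{Bin}(T,1/2)$, for any integer $m \in [u, u+2]$ with $m \equiv T \pmod 2$,
\[
\PP(S_1 = m) \;=\; 2^{-T}\binom{T}{(T+m)/2} \;\geq\; \frac{c_0}{\sqrt{T}}\, e^{-m^2/(2T)},
\]
valid for $|m| \leq T/2$, with an absolute constant $c_0 > 0$. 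Since such an $m$ always exists in $[u, u+2]$, this yields $q \geq c_0 T^{-1/2}\, e^{-(u+2)^2/(2T)}$.

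Plugging in $u = c\sqrt{T\log d}$ for an absolute constant $c > 0$ to be chosen small produces $q \geq c_0'\, T^{-1/2} d^{-c^2/2}$. The hypothesis $d \leq \exp(T/3)$ is exactly what keeps $u$ inside the $[0, T/2]$ window where the Stirling tail bound is valid, provided $c < \sqrt{3}/2$. The remaining task is to verify $q \geq (\ln 2)/d$, equivalently $d^{1-c^2/2} \gtrsim \sqrt{T}$. Here I anticipate the main obstacle: when $d$ is small compared to $T$, the $1/\sqrt{T}$ factor in the Stirling bound is not automatically negligible. I would address this by a two-regime split. In the ``large-$d$'' regime $d \geq T$, the estimate $d^{1-c^2/2} \geq d^{1/2} \geq \sqrt{T}$ closes the gap for any $c^2 \leq 1$. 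In the ``moderate-$d$'' regime $c_1 \leq d < T$, I would instead invoke Berry--Esseen for $S_1/\sqrt{T}$ together with the Gaussian tail bound $\PP(Z \geq c\sqrt{\log d}) \geq c''/(d^{c^2/2}\sqrt{\log d})$, noting that the Berry--Esseen correction $O(1/\sqrt{T})$ is dominated by $\Omega(1/d)$ whenever $d \leq T$. Choosing $c$ small (say $c^2 \leq 1/2$) and $c_1$ large enough to absorb numerical constants, both regimes deliver $q \geq \Omega(1/d)$, which exceeds $(\ln 2)/d$ and produces $p \geq 1/2$. Combining with the first display gives $\Expt[M] \geq (c/2)\sqrt{T\log d}$ and hence the claimed $c_2 \sqrt{T \log d}$ lower bound.
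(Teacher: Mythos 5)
You should first note that the paper does not actually prove Lemma~\ref{lem:radamacher}: it is imported wholesale from \citet{orabona2015optimal}, so there is no internal proof to compare against, and a self-contained argument like yours (reduce $\Expt[M]\ge up-\sqrt{Tp}$ to a single-column tail bound $q\ge(\ln 2)/d$, then estimate the Rademacher tail) is a perfectly legitimate route; that skeleton is sound.

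There are, however, two steps that fail as written. First, the ``technical heart'': the claim $\PP(S_1=m)=2^{-T}\binom{T}{(T+m)/2}\ge \frac{c_0}{\sqrt T}\,e^{-m^2/(2T)}$ with a uniform absolute constant, ``valid for $|m|\le T/2$,'' is false at the far end of that range. The exact exponential rate is $T\,D(m/T)$ with $D(x)=\frac{1+x}{2}\ln(1+x)+\frac{1-x}{2}\ln(1-x)=\frac{x^2}{2}+\frac{x^4}{12}+\cdots$, so at $m=T/2$ the true probability is of order $e^{-T D(1/2)}$ with $D(1/2)\approx 0.1308>1/8$, i.e.\ exponentially \emph{smaller} than your claimed lower bound $e^{-T/8}/\sqrt T$. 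This matters because the lemma allows $d$ up to $\exp(T/3)$, so your threshold $u=c\sqrt{T\log d}$ is a constant fraction of $T$ and you are exactly in the regime where the pure Gaussian form breaks. The standard repair is to use the corrected local bound $\PP(S_1=m)\ge \frac{c_0}{\sqrt T}\,e^{-m^2/(2T)-C m^4/T^3}$ for $|m|\le T/2$, and then observe that $m^4/T^3\le c^4(\ln d)^2/T\le (c^4/3)\ln d$ by the hypothesis $\ln d\le T/3$, so the correction only degrades the exponent from $c^2/2$ to $c^2/2+O(c^4)$; choosing $c$ small keeps it below $1/2$ and the rest of your large-$d$ case survives. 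Second, in the moderate-$d$ regime the assertion that the Berry--Esseen error $O(1/\sqrt T)$ is ``dominated by $\Omega(1/d)$ whenever $d\le T$'' is backwards when $\sqrt T<d\le T$, since there $1/\sqrt T\gg 1/d$. The correct comparison is against the Gaussian tail itself: for $c^2\le 1/2$ it is at least of order $d^{-c^2/2}/\sqrt{\log d}\ge d^{-1/4}/\sqrt{\log d}$, which dominates $C/\sqrt T\le C/\sqrt d$ once $d\ge c_1$ for a large absolute $c_1$, and the surviving tail still exceeds $(\ln 2)/d$. With these two patches (and the harmless bookkeeping between $\log$ base $2$ and $\ln$, absorbed into constants), your proof is correct.
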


\begin{lemma}\label{lem:k-experts}
    For any $2\leq K\leq \exp(T/3)$, there exists a zero-mean distribution $\mathcal{D}_K$ over $\{-1, +1\}^K$ such that if $\by_t \sim \mathcal{D}_K$ for $t = 1, \ldots, T$, then
$$
-\mathbb{E} \left[\min_{i \in \llbracket K \rrbracket} \sum_{t=1}^T \langle \be_i, \by_t \rangle \right] \geq c_0 \cdot \sqrt{T \log K},
$$
where $c_0>0$ is some universal constant
\end{lemma}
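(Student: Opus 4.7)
The plan is to take $\mathcal{D}_K$ to be the product of $K$ independent Rademacher distributions, so that each coordinate $\by_t[i]$ is drawn independently and uniformly from $\{-1,+1\}$. This is manifestly a zero-mean distribution supported on $\{-1,+1\}^K$. Writing $X_i := \sum_{t=1}^T \by_t[i]$, I would use the identity $-\min_i X_i = \max_i(-X_i)$ together with the fact that $-\by_t[i]$ is again Rademacher to rewrite
\[
-\Expt\Big[\min_{i\in \llbracket K\rrbracket} \sum_{t=1}^T \langle \be_i,\by_t\rangle\Big] = \Expt\Big[\max_{i\in \llbracket K\rrbracket} \sum_{t=1}^T \xi_{t,i}\Big],
\]
where $\{\xi_{t,i}\}$ are i.i.d.\ Rademacher. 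The task is thus reduced to lower bounding the expected maximum of the Rademacher sums, which is the exact quantity appearing in Lemma~\ref{lem:radamacher}.

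The natural next step is to split on the magnitude of $K$ relative to the constant $c_1 \geq 8$ from Lemma~\ref{lem:radamacher}. When $c_1 \leq K \leq \exp(T/3)$, Lemma~\ref{lem:radamacher} applies verbatim and gives $\Expt[\max_i \sum_t \xi_{t,i}] \geq c_2\sqrt{T\log K}$, which is exactly the desired bound. When $2 \leq K < c_1$, the target $\sqrt{T\log K}$ is bounded by $\sqrt{T\log c_1} = \Theta(\sqrt{T})$, so it suffices to produce a matching $\Omega(\sqrt{T})$ lower bound. I would obtain this via the trivial inequality $\max_i \sum_t \xi_{t,i} \geq \max\{0,\sum_t \xi_{t,1}\}$, whose expectation equals $\tfrac{1}{2}\Expt|\sum_t \xi_{t,1}|$ by symmetry, and then apply Khintchine's inequality (Lemma~\ref{thm:Khintchine}) to conclude that this last quantity is at least $\tfrac{1}{2}\sqrt{T/2}$.

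To finish, I would pick a universal constant such as $c_0 := \min\bigl\{c_2,\; 1/(2\sqrt{2\log c_1})\bigr\}$, which ensures that both cases yield the required lower bound of $c_0\sqrt{T\log K}$. Since the argument only invokes already-stated tools (Lemma~\ref{thm:Khintchine} and Lemma~\ref{lem:radamacher}) and elementary symmetry, there is no genuine technical obstacle here; the only mild subtlety is handling small $K$, where Lemma~\ref{lem:radamacher} does not directly apply and one must instead appeal to Khintchine's inequality and absorb the loss into the universal constant $c_0$.
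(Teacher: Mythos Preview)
Your approach mirrors the paper's: split on whether $K \geq c_1$ (the constant from Lemma~\ref{lem:radamacher}), invoke that lemma directly when $K \geq c_1$, and give an elementary $\Omega(\sqrt{T})$ bound when $2 \leq K < c_1$. One pleasant difference is that you keep a single distribution (uniform on $\{-1,+1\}^K$) throughout, whereas the paper switches to the uniform distribution on $\{\be_1,-\be_1\}$ for small $K$; your choice has the advantage of actually being supported on $\{-1,+1\}^K$, as the lemma statement demands.

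There is, however, a genuine (if easily repaired) gap in your small-$K$ step. The claimed ``trivial inequality'' $\max_i \sum_t \xi_{t,i} \geq \max\{0,\sum_t \xi_{t,1}\}$ is false pointwise: with positive probability every sum $\sum_t \xi_{t,i}$ is strictly negative, while the right-hand side is always nonnegative. What you need is only the inequality in expectation, and that does hold. Since $K \geq 2$, bound $\max_i X_i \geq \max\{X_1,X_2\}$ pointwise; then, because $z \mapsto \max\{X_1,z\}$ is convex, Jensen conditional on $X_1$ gives $\Expt[\max\{X_1,X_2\}\mid X_1] \geq \max\{X_1,\Expt[X_2]\} = \max\{X_1,0\}$. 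Taking expectations and continuing with your symmetry and Khintchine steps yields $\Expt[\max_i X_i] \geq \tfrac{1}{2}\Expt\bigl|\sum_t \xi_{t,1}\bigr| \geq \tfrac{1}{2}\sqrt{T/2}$, exactly as you intended.
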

\begin{proof}
    Consider the universal constants $c_1$ and $c_2$ from Lemma \ref{lem:radamacher}. If $2 \leq K <c_1$, we define the distribution $\mathcal{D}_K$ as the uniform distribution over $\{\be_1, -\be_1\}$. If $\by_t \sim \mathcal{D}_K$ for $t = 1, \ldots, T$, then by applying the same analysis as in Theorem~\ref{thm:regret-lb-general}, we obtain:
$$
-\mathbb{E} \left[\min_{i \in \llbracket K \rrbracket} \sum_{t=1}^T \langle \be_i, \by_t \rangle \right] \geq \sqrt{T/8}\geq \sqrt{T\log K/(8\log c_1)}.
$$

On the other hand, if $K\geq c_1$, we define the distribution $\mathcal{D}_K$ as the uniform distribution over $\{-1,+1\}^K$. If $\by_t \sim \mathcal{D}_K$ for $t = 1, \ldots, T$, then by applying the Lemma~\ref{lem:radamacher}, we obtain:
$$
-\mathbb{E} \left[\min_{i \in \llbracket K \rrbracket} \sum_{t=1}^T \langle \be_i, \by_t \rangle \right]=\mathbb{E} \left[\max_{i \in \llbracket K \rrbracket} \sum_{t=1}^T \langle \be_i, -\by_t \rangle \right] \geq c_2\cdot\sqrt{T\log K}.
$$
\end{proof}


\section{Numerical Lemmas}

\begin{lemma}\label{lem:floor}
    Let $a,b>0$ be two numbers such that $a\geq b$. Then $$\bigg\lfloor \frac{a}{b}\bigg\rfloor\geq \frac{a}{2b}.$$
\end{lemma}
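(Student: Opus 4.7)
The plan is to let $x := a/b$, so the hypothesis $a \geq b > 0$ gives $x \geq 1$, and the goal becomes the purely numerical inequality $\lfloor x \rfloor \geq x/2$ for all $x \geq 1$. I would split into two cases according to whether $x < 2$ or $x \geq 2$.

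In the first case, $1 \leq x < 2$, the floor satisfies $\lfloor x \rfloor = 1$, while $x/2 < 1$, so the inequality holds trivially. In the second case, $x \geq 2$, I would use the standard bound $\lfloor x \rfloor > x - 1$ (equivalently $\lfloor x \rfloor \geq x - 1$, since the floor is an integer and $x - 1 \leq \lfloor x \rfloor \leq x$). The condition $x \geq 2$ is exactly equivalent to $x - 1 \geq x/2$, so combining gives $\lfloor x \rfloor \geq x - 1 \geq x/2$, as desired. Substituting back $x = a/b$ yields $\lfloor a/b \rfloor \geq a/(2b)$.

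There is no real obstacle here: the only subtlety is recognizing that the naive bound $\lfloor x \rfloor \geq x - 1$ is not enough by itself near $x = 1$ (where $x - 1$ can be smaller than $x/2$), which is exactly why the case split at $x = 2$ is needed. Everything else is a one-line calculation.
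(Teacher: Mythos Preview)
Your proof is correct and takes essentially the same approach as the paper: both substitute $x := a/b \geq 1$ and reduce to the elementary inequality $\lfloor x \rfloor \geq x/2$. The only cosmetic difference is that the paper writes $n := \lfloor x \rfloor$ and observes $(n+1)/2 \leq n$ for $n \geq 1$ in one shot, whereas you split at $x = 2$; the content is the same.
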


\begin{proof}
If we set
\[
x=\frac{a}{b}\ge1,\quad n=\bigl\lfloor x\bigr\rfloor,
\]
then
\[
x=n+k,\quad 0\le k<1,
\]
so
\[
\frac{a}{2b}=\frac{x}{2}=\frac{n+k}{2}\le\frac{n+1}{2}.
\]
Since $n\ge1$, 
\[
\frac{n+1}{2}\le n
\Longleftrightarrow
n+1\le2n
\Longleftrightarrow
n\ge1.
\]
Hence
\[
\frac{a}{2b}=\frac{x}{2}\le n
=\bigg\lfloor \frac{a}{b}\bigg\rfloor.
\]
\end{proof}

\begin{lemma}\label{lem:increasing-func}
   For any $a>3$, $(a/x)^x$ is an increasing in the range $[1,a/3]$.
\end{lemma}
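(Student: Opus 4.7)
The plan is to prove this by computing the derivative of $f(x) := (a/x)^x$ and showing it is positive on the interval $[1, a/3]$.

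First I would rewrite $f$ in exponential form as $f(x) = \exp\bigl(x \ln(a/x)\bigr) = \exp\bigl(x \ln a - x \ln x\bigr)$, which is clearly differentiable for $x > 0$. Letting $g(x) := x \ln a - x \ln x$, a direct computation gives $g'(x) = \ln a - \ln x - 1 = \ln(a/x) - 1$, so by the chain rule
\[
f'(x) = f(x) \cdot \bigl(\ln(a/x) - 1\bigr).
\]
Since $f(x) = (a/x)^x > 0$ for all $x \in [1, a/3]$ (noting that $a > 3$ ensures $a/x > 0$), the sign of $f'(x)$ matches the sign of $\ln(a/x) - 1$.

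Next I would verify that $\ln(a/x) - 1 > 0$ on the interval of interest. For any $x \in [1, a/3]$, we have $a/x \geq a/(a/3) = 3$. Since $3 > e$, it follows that $\ln(a/x) \geq \ln 3 > 1$, so $\ln(a/x) - 1 > 0$. Consequently $f'(x) > 0$ throughout $[1, a/3]$, proving that $f$ is strictly increasing there.

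I do not anticipate any real obstacle; the entire argument is a routine monotonicity check via differentiation, and the only mild subtlety is observing that $3 > e$ (rather than, say, $\geq e$) so that strict inequality holds on the closed interval up to $a/3$.
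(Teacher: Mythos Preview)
Your proof is correct and essentially identical to the paper's: both take the logarithm, compute $g'(x)=\ln(a/x)-1$, observe that $a/x\ge 3>e$ on $[1,a/3]$ so $g'(x)>0$, and conclude $f'(x)=f(x)g'(x)>0$.
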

\begin{proof}
It suffices to show that for every real $x\in[1,a/4]$ the derivative of
\[
f(x):=\bigg(\frac{a}{x}\bigg)^x
\]
is strictly positive. Consider 
\[
g(x):=\ln f(x)
=x\ln\bigg(\frac{a}{x}\bigg)
=x\ln a-x\ln x,
\]
so
\[
g'(x)=\ln a-(\ln x+1)
=\ln\bigg(\frac{a}{x}\bigg)-1.
\]
Whenever $x\le a/3$, we have
\[
\frac{a}{x}\ge\frac{a}{a/3}=3
\quad\Longrightarrow\quad
\ln\bigg(\frac{a}{x}\bigg)\ge\ln 3.
\]
Since $\ln3\approx1.098>1$, it follows that
\[
g'(x)=\ln\bigg(\frac{a}{x}\bigg)-1\ge\ln3-1>0.
\]
Because $f'(x)=f(x)g'(x)$ and $f(x)>0$, we obtain
\[
f'(x)>0
\quad\text{for all }1\le x\le a/3.
\]
\end{proof}

\begin{lemma} \label{lm:binom-S-X}
    Denote by 
    \[
    \cX := \bigg\{\bx \in \{0, 1\}^d : \sum_{i=1}^d \bx[i] = m \bigg\}.
    \]
    \[
    \cS := \bigg\{\bx \in \cX : \big|\big\{ i \in \llbracket m \rrbracket : \bx[i] \neq 1 \big\} \big| \geq \bigg\lfloor \frac{m}{20} \bigg\rfloor \bigg\}.
    \] 
    When $m \in \llbracket 20, d / 2\rrbracket$, it satisfies that 
    \[
    \frac{|\cX \setminus \cS|}{|\cX|} \leq \exp \bigg(-\frac{m}{20}\cdot \ln\bigg(\frac{d}{m}\bigg) \bigg)
    \]
\end{lemma}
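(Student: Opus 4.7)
Set $K := \lfloor m/20 \rfloor \geq 1$, which is positive by $m \geq 20$. A vector $\bx \in \cX \setminus \cS$ is precisely an $m$-subset of $\llbracket d \rrbracket$ containing at most $K - 1$ zeros in the first $m$ positions; equivalently (since the total Hamming weight is $m$) at most $K - 1$ ones in the positions $\{m+1, \ldots, d\}$. Parameterizing by the number $k$ of ones outside $\llbracket m \rrbracket$,
\[
|\cX \setminus \cS| \;=\; \sum_{k=0}^{K-1}\binom{m}{m-k}\binom{d-m}{k} \;=\; \sum_{k=0}^{K-1} a_k, \qquad \text{where } a_k := \binom{m}{k}\binom{d-m}{k}.
\]
Thus $|\cX \setminus \cS|/|\cX|$ is the tail of a hypergeometric distribution, and the proof reduces to bounding a sum of binomial products divided by $\binom{d}{m}$.

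\textbf{Geometric growth of the $a_k$.} A direct computation gives $a_{k+1}/a_k = (m-k)(d-m-k)/(k+1)^2$. For every $k \leq K - 2$ the hypotheses $m \geq 20$ and $d \geq 2m$ yield $m - k \geq 19m/20$ and $d - m - k \geq m - k \geq 19m/20$, while $(k+1)^2 \leq (m/20)^2$. Therefore $a_{k+1}/a_k \geq (19m/20)^2 / (m/20)^2 = 361$, so by geometric summation $\sum_{k=0}^{K-1} a_k \leq 2 a_{K-1}$. It remains to bound a single boundary term $a_{K-1}/\binom{d}{m}$.

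\textbf{The boundary term.} Using the standard estimate $\binom{n}{k} \leq (en/k)^k$ for $k \geq 1$, combined with the elementary bound $\binom{d}{m} \geq (d/m)^m$ (each factor $(d-i)/(m-i)$ in the product expansion is $\geq d/m$ when $d \geq m$), we have
\[
\frac{a_{K-1}}{\binom{d}{m}} \;\leq\; \Bigl(\tfrac{e^2 m d}{(K-1)^2}\Bigr)^{K-1}\Bigl(\tfrac{m}{d}\Bigr)^m \;\leq\; \Bigl(400 e^2 \cdot \tfrac{d}{m}\Bigr)^{m/20}\Bigl(\tfrac{m}{d}\Bigr)^m,
\]
after plugging in $(K-1)^2 \leq m^2/400$ and $K - 1 \leq m/20$. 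Factoring out $(m/d)^{m/20}$ rewrites the right-hand side as $\bigl(400 e^2 (m/d)^{18}\bigr)^{m/20} \cdot (m/d)^{m/20}$, and since $m/d \leq 1/2$ we have $400 e^2 (m/d)^{18} \leq 400 e^2 / 2^{18} < 1/4$. Combining with the factor $2$ from the previous step concludes $|\cX \setminus \cS|/|\cX| \leq (m/d)^{m/20} = \exp\bigl(-(m/20)\ln(d/m)\bigr)$.

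\textbf{Main obstacle and edge cases.} The one delicate point is the constant arithmetic in the last step: the multiplicative overhead $(400 e^2)^{m/20}$ arising from the two $(en/k)^k$ estimates and $(K-1)^2 \leq m^2/400$ must be absorbed by the gain $(d/m)^{18m/20} \geq 2^{18m/20}$ secured by $d \geq 2m$, and this reduces to the single numerical inequality $400 e^2 \leq 2^{18}$. The degenerate subcase $K = 1$ (i.e., $m \in \{20, \ldots, 39\}$) is not covered by the binomial estimate since $K - 1 = 0$; there $|\cX \setminus \cS| = a_0 = 1$, and the desired bound $1/\binom{d}{m} \leq (m/d)^{m/20}$ follows directly from $\binom{d}{m} \geq (d/m)^m \geq (d/m)^{m/20}$.
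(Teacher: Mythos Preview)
Your proof is correct and follows the same approach as the paper: count $|\cX\setminus\cS|=\sum_{k<K}\binom{m}{k}\binom{d-m}{k}$, bound the sum by a constant times its largest term, apply $\binom{n}{k}\le(en/k)^k$ and $\binom{d}{m}\ge(d/m)^m$, and absorb the resulting constants using $d\ge 2m$. The only loose point is the phrase ``plugging in $(K-1)^2\le m^2/400$ and $K-1\le m/20$'': read as two independent substitutions this would bound the base $e^2md/(K-1)^2$ in the wrong direction; what is actually needed (and what the paper makes explicit via its monotonicity lemma for $(a/x)^x$) is that $x\mapsto(e^2md/x^2)^x$ is increasing on $[1,m/20]$, so the whole expression may be evaluated at $x=m/20$.
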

\begin{proof}
    We upper bound $|\cX \setminus \cS|$ by enumerating on possible values of $k:=\big|\big\{ i \in \llbracket m \rrbracket : \bx[i] \neq 1 \big\} \big| = \big|\big\{ i \notin \llbracket m \rrbracket : \bx[i] = 1 \big\} \big|$.
    \begin{align*}
    |\cX \setminus \cS| &= \sum_{k=0}^{\lfloor m / 20 \rfloor - 1} \binom{m}{k} \binom{d - m}{k} \\
    &\leq \frac{m}{20} \cdot \binom{m}{\lfloor m / 20 \rfloor} \binom{d}{\lfloor m / 20 \rfloor} \\
    &\leq \frac{m}{20}\cdot \left(\frac{em}{\lfloor m / 20 \rfloor}\right)^{\lfloor m / 20 \rfloor}\bigg(\frac{ed}{\lfloor m / 20 \rfloor}\bigg)^{\lfloor m / 20 \rfloor}\tag{as ${d\choose m}\leq \left(\frac{ed}{m}\right)^m$}\\
    &\leq \frac{m}{20}\cdot (20e)^{m/20}\bigg(\frac{20ed}{m}\bigg)^{m/20}\tag{due to Lemma \ref{lem:increasing-func}}\\
    &\leq\frac{(m/20)\cdot \left(20e\right)^{m/10}\left(d/m\right)^{m/20}}{(d/m)^m}\cdot {d\choose m}\tag{as $(d/m)^m\leq {d\choose m}$}\\
    &\leq \frac{(m/20)\cdot \left(20e\right)^{m/10}}{2^{9m/10}}\cdot(m/d)^{m/20}\cdot {d\choose m}\tag{as $m\leq d/2$}\\
    &\leq \exp \bigg(-\frac{m}{20}\cdot \ln\bigg(\frac{d}{m}\bigg) \bigg)\cdot {d\choose m}\tag{as $\frac{(m/20)\cdot \left(20e\right)^{m/10}}{2^{9m/10}}\leq 1$}
\end{align*}
Since $|\cX| = \binom{d}{m}$, we immediately reach the desired result.
\end{proof}

\end{document}